\documentclass{article}

\usepackage{microtype}
\usepackage{graphicx}
\usepackage{subcaption}
\usepackage{booktabs} 

\usepackage{hyperref}

\usepackage[preprint]{icml2026}

\usepackage{amsmath}
\usepackage{amssymb}
\usepackage{mathtools}
\usepackage{amsthm}

\usepackage[capitalize,noabbrev]{cleveref}

\usepackage{amsmath,amsfonts,bm}

\def\eqref#1{equation~\ref{#1}}

\def\1{\bm{1}}

\def\vw{{\bm{w}}}
\def\vx{{\bm{x}}}
\def\vy{{\bm{y}}}

\def\mA{{\bm{A}}}

\def\mI{{\bm{I}}}

\def\mM{{\bm{M}}}

\def\mS{{\bm{S}}}

\def\mU{{\bm{U}}}
\def\mV{{\bm{V}}}
\def\mW{{\bm{W}}}
\def\mX{{\bm{X}}}

\def\mSigma{{\bm{\Sigma}}}

\DeclareMathAlphabet{\mathsfit}{\encodingdefault}{\sfdefault}{m}{sl}
\SetMathAlphabet{\mathsfit}{bold}{\encodingdefault}{\sfdefault}{bx}{n}

\def\gA{{\mathcal{A}}}
\def\gB{{\mathcal{B}}}

\def\gD{{\mathcal{D}}}

\def\gF{{\mathcal{F}}}

\def\gN{{\mathcal{N}}}

\def\gP{{\mathcal{P}}}
\def\gQ{{\mathcal{Q}}}

\def\sP{{\mathbb{P}}}
\def\sQ{{\mathbb{Q}}}
\def\sR{{\mathbb{R}}}
\def\sS{{\mathbb{S}}}

\newcommand{\E}{\mathbb{E}}

\newcommand{\R}{\mathbb{R}}

\DeclareMathOperator*{\argmin}{arg\,min}

\usepackage[flushmargin]{footmisc}

\usepackage{hyperref}
\usepackage{url}

\usepackage{mathtools}
\usepackage{amsmath}
\usepackage{bbm}
\usepackage{amsthm}
\usepackage{amssymb}
\usepackage{physics}
\usepackage{enumitem}

\theoremstyle{plain}
\newtheorem{theorem}{Theorem}[section]
\newtheorem{proposition}[theorem]{Proposition}
\newtheorem{lemma}[theorem]{Lemma}

\theoremstyle{definition}

\newtheorem{assumption}[theorem]{Assumption}

\newcommand{\cC}{\mathcal{C}}
\newcommand{\cN}{\mathcal{N}}

\DeclareMathOperator{\SimDiag}{SimDiag}

\def\diag{{\textbf{diag}}}
\def\Auto{{\mathrm{Auto}}}
\def\Extract{{\mathrm{Extract}}}
\DeclareMathOperator{\dist}{dist}

\newcommand{\dens}{\mathrm{dens}}
\DeclareMathOperator{\find}{find}
\DeclareMathOperator{\hop}{hop}

\usepackage[most]{tcolorbox}
\usepackage{needspace}

\newtcolorbox{examplebox}[1]{
  breakable,
  colback=white,
  colframe=black!60,
  boxrule=0.6pt,
  arc=2pt,
  left=6pt,right=6pt,top=6pt,bottom=6pt,
  fonttitle=\bfseries,
  title=#1
}

\newtcolorbox{modeloutbox}{
  breakable,
  colback=black!1,
  colframe=black!15,
  boxrule=0.4pt,
  arc=2pt,
  left=6pt,right=6pt,top=4pt,bottom=4pt,
  fontupper=\ttfamily\footnotesize
}

\newtcblisting{modelout}{
  listing only,
  breakable,
  colback=black!1,
  colframe=black!15,
  boxrule=0.4pt,
  arc=2pt,
  left=6pt,right=6pt,top=4pt,bottom=4pt,
  listing options={
    basicstyle=\ttfamily\footnotesize,
    columns=fullflexible,
    breaklines=true,
    breakatwhitespace=true
  }
}

\usepackage[textsize=tiny]{todonotes}

\icmltitlerunning{Beyond What Seems Necessary: Hidden Gains from Scaling Training-Time Reasoning Length under Outcome Supervision}

\begin{document}

\twocolumn[
  \icmltitle{Beyond What Seems Necessary: Hidden Gains from Scaling Training-Time Reasoning Length under Outcome Supervision}




  \begin{icmlauthorlist}
    \icmlauthor{Yihao Xue}{ucla}
    \icmlauthor{Allan Zhang}{ucla}
    \icmlauthor{Jianhao Huang}{ucla}
    \icmlauthor{Amit Sahai}{ucla}
    \icmlauthor{Baharan Mirzasoleiman}{ucla}
  \end{icmlauthorlist}

  \icmlaffiliation{ucla}{University of California, Los Angeles}

  \icmlcorrespondingauthor{Yihao Xue}{yihaoxue@g.ucla.edu}

  \icmlkeywords{Machine Learning, ICML}

  \vskip 0.3in
]



\printAffiliationsAndNotice{}  

\begin{abstract}
Training LLMs to think and reason for longer has become a key ingredient in building state-of-the-art models that can solve complex problems previously out of reach. Recent efforts pursue this in different ways, such as RL fine-tuning to elicit long CoT or scaling latent reasoning through architectural recurrence. This makes reasoning length an important scaling knob. In this work, we identify a novel phenomenon (both theoretically and experimentally): under outcome-only supervision, out-of-distribution (OOD) performance can continue improving as training-time reasoning length (e.g., the token budget in RL, or the loop count in looped Transformers) increases, even after in-distribution (ID) performance has saturated. This suggests that robustness may require a larger budget than ID validation alone would indicate. We provide theoretical explanations via two mechanisms: (i) self-iteration can induce a stronger inductive bias in the hypothesis class, reshaping ID-optimal solutions in ways that improve OOD generalization; and (ii) when shortcut solutions that work for ID samples but not for OOD samples persist in the hypothesis class, regularization can reduce the learned solution's reliance on these shortcuts as the number of self-iterations increases. We complement the theory with empirical evidence from two realizations of scaling training-time reasoning length: increasing the number of loops in looped Transformers on a synthetic task, and increasing token budgets during RL fine-tuning of LLMs on mathematical reasoning.\looseness=-1

\end{abstract}

\section{Introduction}




Large language models (LLMs) are increasingly trained to \emph{think longer}: they generate multiple intermediate steps before outputting the final answer. A prominent example is chain-of-thought (CoT) reasoning, where the model attempts to solve a problem step by step. In practice, such behavior is often induced through reinforcement learning (RL) fine-tuning \cite{schulman2015trust,schulman2017proximal,ouyang2022training,shao2024deepseekmath,yu2025dapo}, where the model is trained to generate substantial ``thinking'' tokens \cite{guo2025deepseek} during rollouts. In parallel, another line of research studies \emph{latent} reasoning carried out in continuous embedding space, for example via architectural recurrence such as looped Transformers \cite{giannou2023looped,saunshi2025reasoning} or continuous CoT \cite{hao2024training}. Together, these developments make \emph{reasoning length}, whether explicit (tokens) or latent (loops), a central scaling knob.\looseness=-1

Despite growing efforts to understand how CoT or looping affects a model's expressivity \cite{xu2024expressive,li2024chain,giannou2023looped}, and work on inference-time scaling of reasoning \cite{snell2024scaling,wu2024inference}, the effects of scaling this knob \emph{during training} remain underexplored. This question is particularly important under outcome-only supervision \cite{lightman2023let,jia2025we}---for example, in RL the reward is often assigned solely based on the final answer, and looped Transformers similarly provide no explicit guidance about how intermediate computation should proceed. In such settings, intermediate behavior is learned implicitly, making it crucial to understand what kinds of computation emerge through training. Essentially, scaling training-time reasoning length amounts to allowing more \emph{self-iteration} during training without specifying how those iterations should be carried out. We thus lack a learning-theoretic account of:\looseness=-1
\begin{list}{}{\leftmargin=0pt \rightmargin=0pt}
\vspace{-.2cm}
\item \emph{How does allowing the model to self-iterate more times during training (e.g., using a larger token budget during RL, or training looped Transformers with more loops) change the solution that learning selects? Can increasing reasoning length lead the model to solve the same task via a qualitatively different mechanism?}
\vspace{-.3cm}
\end{list}

Understanding this question matters in realistic settings where training and validation data are limited and distribution shift is expected at deployment. In such cases, in-distribution (ID) validation performance can be misleading: models can be ``right for the wrong reasons,'' and an apparent ID plateau can hide further gains in robust generalization. Indeed, we find that out-of-distribution (OOD) performance can continue improving with increased training-time reasoning length even after ID performance has saturated.

We provide theoretical explanations for these hidden OOD gains from a learning-theoretic perspective that views increasing training-time reasoning length as increasing the number of self-iterations, and analyzes how this changes the effective hypothesis class explored by the training algorithm. This shift can alter \emph{which} ID-optimal solution is selected, even when many solutions achieve the same ID error. We formalize two mechanisms: (i) self-iteration can induce a stronger inductive bias, reshaping the set of ID-optimal solutions in ways that improve OOD generalization; and (ii) when shortcut solutions persist in the hypothesis class, regularization can reduce the learned solution's reliance on these shortcuts as the number of self-iterations increases

We substantiate this phenomenon empirically through two realizations of training-time reasoning length. First, we scale \emph{latent} reasoning in looped Transformers by increasing the loop count. Training on synthetic algorithmic tasks, we observe that ID accuracy saturates at small loop counts while OOD accuracy continues to improve gradually. Second, we scale the length of \emph{explicit} CoT reasoning by increasing the token budget during RL finetuning on mathematical tasks, where ID and OOD correspond to different problem topics; we similarly find that OOD accuracy can keep improving well beyond the point where ID accuracy saturates.\looseness=-1

Overall, we provide theoretical justification for a practical recommendation: when training is outcome-supervised and distribution shift is expected after deployment, it can be beneficial to train with longer reasoning lengths (larger CoT token budgets in RL or more loops in looped Transformers) beyond what appears sufficient to saturate validation performance. In-distribution performance can be misleading, masking substantial OOD gains from further scaling.\looseness=-1

\section{Related Work}
\label{sec:related_work}

\textbf{RL post-training for reasoning.}
A key recent development is the use of variants of RL algorithms \cite{schulman2015trust,schulman2017proximal,ouyang2022training,shao2024deepseekmath,yu2025dapo} in post-training to produce models that generate longer CoT deliberation \citep{wei2022chain,kojima2022large} and thereby solve more complex problems. This paradigm is emphasized in prominent reasoning-model reports \cite{jaech2024openai,shao2024deepseekmath,guo2025deepseek,muennighoff2025s1}. In RL fine-tuning, training rollouts are typically capped by a maximum generation length, so the model learns to solve problems within a fixed token budget. The choice of this budget therefore becomes an important scaling knob: training with a larger budget typically yields more capable models \citep{khatri2025art,cai2025escaping,setlur2025e3}. \looseness=-1

\textbf{Latent reasoning via structural recurrence.}
In parallel with efforts to elicit longer explicit CoT, structural recurrence offers another route to scaling reasoning via latent ``thought". Looped Transformers apply the same Transformer backbone repeatedly with shared weights—an idea that can be traced back to \cite{dehghani2018universal}—and have recently attracted renewed interest due to their theoretical expressivity, including the ability to simulate arbitrary Turing machines \cite{de2024simulation,giannou2023looped}, express iterative algorithms \cite{yang2023looped,gao2024expressive,gatmiry2024can,gatmiry2024role,xu2024expressive}, and improved length generalization \cite{fan2024looped}. \citet{saunshi2025reasoning} shows that looped Transformers can exhibit strong reasoning performance and highlights empirical and theoretical connections to CoT, suggesting that looping can support a form of ``latent thought'' reasoning. This perspective is also closely related to the idea of continuous CoT \cite{hao2024training}.\looseness=-1

We connect these threads by viewing them as different realizations of \emph{self-iteration}. We offer the first learning-theoretic perspective on how the induced hypothesis class evolves as we scale the number of self-iterations, shaping what is learned on one distribution and how it generalizes OOD.\looseness=-1
\section{Training-Time Self-Iteration Can Yield OOD Gains Without ID Gains}

In this section, we develop theory to shed light on the phenomenon that \emph{OOD performance can continue improving as we increase training-time reasoning length, even when there is no visible ID improvement}. Increasing reasoning length---whether explicit (e.g., via CoT tokens) or latent (e.g., via the number of loops in looped Transformers)---can be viewed as different realizations of increasing self-iteration. Our key observation is that permitting more iterations before emitting the output changes how the model can produce its final answer. From a learning-theoretic perspective, this corresponds to a change in the \emph{hypothesis class} over which the training algorithm searches. A natural implication is that when the ID optimum can be achieved in multiple ways, changing the hypothesis class can steer learning toward a different ID-optimal solution that behaves differently under distribution shift. We will illustrate two mechanisms through which this effect can occur.\looseness=-1

\textbf{Notation.} We study a learning problem in which the goal is to learn a model \( f : \mathcal{X} \to \mathcal{Y} \) from a hypothesis class \( \mathcal{F}_k \), with ground-truth target function \( z : \mathcal{X} \to \mathcal{Y} \). Here \(k\) denotes the number of training-time iterations (e.g., the token budget for CoT or the number of loops in a looped transformer), and \( \mathcal{F}_k \) denotes the resulting hypothesis class. We consider a population-level setting in which the model is trained with respect to a distribution \( \mathcal{P} \) over \( \mathcal{X} \) and is subsequently evaluated on a different distribution \( \mathcal{Q} \).\looseness=-1

Understanding our phenomenon therefore requires analyzing how \( \mathcal{F}_k \) evolves as \( k \) increases, and how this evolution interacts with the learning algorithm and the data distribution. We will provide several characterizations in which increasing \(k\) yields OOD gains despite little or no ID improvement, through different mechanisms.

\subsection{Self-Iteration Can Strengthen Inductive Bias in the Hypothesis Class and Yield OOD Gains}
\label{subsec:iteration_reduce}

It is often tempting to think that self-iteration increases expressivity—e.g., it may enable a model to implement a broader class of serial computations \cite{merrill2023expressive,li2024chain}. However, increased expressivity alone cannot explain why we obtain solutions with better OOD performance but the same ID performance. We find that self-iteration can steer the solution toward one that is better aligned with the underlying target by inducing a more favorable inductive bias in the hypothesis class, rather than merely expanding its capacity. Indeed, if the base function class $\mathcal{F}_1$ is already sufficiently expressive—as is commonly the case for large neural networks—then self-iteration need not increase capacity; e.g., the $k$-fold self-iterate imposes the constraint that the function must admit a $k$-th compositional root. In the remainder of this subsection, we present two such examples: one where self-iteration shrinks the hypothesis space, and another where self-iteration shifts it—both in ways that are more favorable for OOD generalization, while remaining equally optimal in-distribution.\looseness=-1

\subsubsection{Example: Self-Iterating the Full Hypothesis Class on a Cycle Task}
\label{subsubsec:composition_stronger_bias}

We illustrate the effect of self-iteration through a deliberately extreme toy example. The key ``magic'' is that the base hypothesis class is maximally expressive: \(\gF_1\) contains \emph{every} possible function and is therefore completely unconstrained, much like an overparameterized neural network that can fit the data in many incompatible ways. Nevertheless, when we restrict the learner to \(k\)-fold self-compositions of functions from this same class, the resulting hypothesis class \(\gF_k\) acquires a strong implicit structure---turning an unconstrained class into one that selects a uniquely correct solution and generalizes perfectly to the OOD data.

\textbf{A cycle task.}
Fix an integer \(n\ge 3\) and define the domain $
\sS \coloneqq \{0,1,\dots,n-1\}$.
Fix any held-out point \(q\in\sS\), and let $
\sQ \coloneqq \{q\}$ and $
\sP \coloneqq \sS\setminus \sQ$. Let \(\gP\) be the uniform distribution over \(\sP\) and \(\gQ\) be the uniform distribution over \(\sQ\)
(a point mass at \(q\)). Let the ground-truth target \(z\!:\!\sS\!\to\!\sS\) be some cyclic permutation on \(\sS\).\looseness=-1

Given a function \(f:\sS\!\to\!\sS\) and a distribution \(\gD\) over \(\sS\), define the \(0\text{-}1\) loss $
L_{\gD}(f) \coloneqq \E_{x\sim \gD}\,\mathbf{1}\{f(x)\neq z(x)\}$. Given a hypothesis class
$\mathcal{H}$, we minimize the ID population loss \(L_{\gP}\), namely $
\argmin_{f\in\mathcal{H}} L_{\gP}(f)$. We then evaluate the resulting minimizer on \(\gQ\) as a measure of OOD performance.\looseness=-1

We take the base hypothesis class \(\gF\) to be the \emph{entire} class of endomorphisms on \(\sS\), and define \(\gF_k\) to be the class of \(k\)-fold self-compositions of functions in \(\gF\):
\[
\gF \coloneqq \{\,f:\sS\to\sS\,\},
\qquad
\gF_k \coloneqq \gF^{\circ k} \;=\; \{\,f^{\circ k}: f\in\gF\,\}.
\]
Here \(f^{\circ k}\) denotes the \(k\)-fold self-composition \(f\circ f \circ \cdots \circ f\).

\textbf{OOD failure at \(k\!=\!1\).}
With \(k\!=\!1\), the ID minimizer can fit \(\gP\) perfectly while behaving arbitrarily on $\gQ$.\looseness=-1

\begin{proposition}\label{prop:cycle_k1_failure}
In the task above, there exists \(\hat f_1\! \in\! \argmin_{f\in\gF_1} L_{\gP}(f)\) s.t.  $
L_{\gP}(\hat f_1)=0 $ and $
L_{\gQ}(\hat f_1)=1$.\looseness=-1
\end{proposition}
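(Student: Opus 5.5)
The plan is to write down $\hat f_1$ explicitly and check its two losses directly. Since $\gF_1=\gF$ contains every map $\sS\to\sS$, there is nothing to fight: take the target $z$ and corrupt it at the single held-out point $q$.

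\textbf{Construction.} Pick any element $y\in\sS$ with $y\neq z(q)$; one exists because $n\ge 3>1$. Define
\[
\hat f_1(x)\coloneqq\begin{cases} z(x), & x\neq q,\\ y, & x=q.\end{cases}
\]
This is a function $\sS\to\sS$, so $\hat f_1\in\gF=\gF_1$.

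\textbf{ID loss.} For every $x\in\sP=\sS\setminus\{q\}$ we have $\hat f_1(x)=z(x)$. Hence
\[
L_{\gP}(\hat f_1)=\E_{x\sim\gP}\mathbf{1}\{\hat f_1(x)\neq z(x)\}=0 .
\]
The $0$-$1$ loss is nonnegative, so $0$ is the minimum possible value of $L_{\gP}$. Therefore $\hat f_1\in\argmin_{f\in\gF_1}L_{\gP}(f)$.

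\textbf{OOD loss.} $\gQ$ is the point mass at $q$, and $\hat f_1(q)=y\neq z(q)$. Hence
\[
L_{\gQ}(\hat f_1)=\mathbf{1}\{y\neq z(q)\}=1 .
\]

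No step is a real obstacle. The only thing to confirm is that the argmin is a set that may contain many minimizers, so we only need to exhibit one member achieving zero ID loss; the cyclicity of $z$ plays no role here.
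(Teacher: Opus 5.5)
Your proof is correct and matches the paper's one-line argument: because $\gF_1$ contains every map $\sS\to\sS$, you interpolate $z$ on $\sP$ and send $q$ to any value other than $z(q)$. Your version just spells out details the paper leaves implicit: such a value exists since $n\ge 3$, and zero $0$-$1$ loss is automatically minimal.
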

\vspace{-.2cm}
\begin{proof} Immediate since \(\gF_1=\gF\) contains all functions \(\sS\to\sS\): one can interpolate \(\sP\) and set the value at \(q\) arbitrarily.
\end{proof}

\textbf{Most sufficiently large \(k\) enforce a unique OOD-optimal ID minimizer.}
We next show that, despite \(\gF\) being completely unconstrained, for a large set of \(k\) the iterated class \(\gF_k\)
induces a much stronger inductive bias: the ID minimizer becomes \emph{unique} and equals the ground-truth \(z\), and hence is OOD-optimal.
We quantify ``most'' via (lower) natural density: for a set \(A\subseteq\mathbb{N}\), define
\[
\dens(A)\coloneqq \liminf_{N\to\infty}\frac{|A\cap\{1,2,\dots,N\}|}{N}.
\]
Intuitively, it is the asymptotic fraction of integers in \(A\).\looseness=-1

\begin{theorem}\label{thm:high_density_unique_ood_clean} For any \(\varepsilon>0\), there exists a prime $n$ and a set of ``good" values of $k$
\[
\mathcal{K}\coloneqq \{\,k\geq n-1:\ \gcd(k,n)=1 \ \text{and}\ \gcd(k,n-1)>1\,\}
\]
such that the following holds. (i) \(\mathcal{K}\) has high density: $
\dens(\mathcal{K}) \!\ge\! 1\!-\!\varepsilon$. (ii) For every choice of held-out point \(q\in\sS\), every cyclic-permutation target \(z\), and every \(k\in\mathcal{K}\), the ID minimizer over \(\gF_k\) is unique and equals the ground truth: $
\argmin_{f\in\gF_k} L_{\gP}(f) \!=\! \{z\}$. In particular, the unique minimizer \(\hat f=z\) achieves zero OOD error, i.e., \(L_{\gQ}(\hat f)=0\).
\end{theorem}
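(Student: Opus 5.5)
The plan is to treat existence and uniqueness of the ID minimizer as a statement about the functional graphs of $k$-th powers of self-maps of $\sS$. The density claim (i) is handled separately by a number-theoretic choice of $n$.

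\textbf{Step 1: $z\in\gF_k$.} Since $z$ is an $n$-cycle and $\gcd(k,n)=1$, let $k'$ be the inverse of $k$ modulo $n$. Then $g\coloneqq z^{\circ k'}$ satisfies $g^{\circ k}=z^{\circ kk'}=z$, because $z^{\circ n}=\mathrm{id}$. Hence $z\in\gF_k$ and $\min_{\gF_k}L_{\gP}=0$. Every minimizer $h=f^{\circ k}$ must therefore agree with $z$ on $\sP=\sS\setminus\{q\}$.

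\textbf{Step 2: shape of a bad minimizer.} Suppose $h$ agrees with $z$ off $q$ but $h(q)\neq z(q)$. Follow $z$ from $z(q)$ through all $n$ points, ending at $q$; then $h(q)$ lands back on this path at some point other than $z(q)$. So the functional graph of $h$ consists of a single cycle of length $L$, with $1\le L\le n-1$, together with one path (tail) of length $n-L$ feeding into it. The starting point $z(q)$ of this tail has no $h$-preimage and lies at distance exactly $n-L\ge 1$ from the cycle.

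\textbf{Step 3: $h$ is not a $k$-th power.} I will use two standard facts about $f^{\circ k}$.
\begin{itemize}
\item A cycle of $f$ of length $c$ becomes $\gcd(c,k)$ cycles of length $c/\gcd(c,k)$ in $f^{\circ k}$.
\item A point at $f$-distance $d$ from the cycles is at $f^{\circ k}$-distance $\lceil d/k\rceil$.
\end{itemize}
Since $h$ has exactly one cycle, $f$ has exactly one cycle, of length $c=L$, and $\gcd(L,k)=1$. The cycle of $f$ occupies $L$ points, so every point has $f$-depth $d\le n-L\le n-1\le k$, which forces $\lceil d/k\rceil\le 1$. Applying this to $z(q)$ gives $n-L\le 1$, so $n-L=1$ and $L=n-1$. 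Then $\gcd(n-1,k)=1$, which contradicts $k\in\mathcal{K}$. Hence $h(q)=z(q)$, so $h=z$, the argmin is exactly $\{z\}$, and $L_{\gQ}(z)=0$. This argument does not depend on $q$ or on which cycle $z$ is, so (ii) holds for every $q$ and every cyclic target.

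\textbf{Step 4: density.} The set $\mathcal{K}$ agrees with a periodic set except for the finitely many $k<n-1$, which do not affect density. Its density is therefore
\[
\Bigl(1-\tfrac1n\Bigr)\Bigl(1-\prod_{p\mid n-1}\bigl(1-\tfrac1p\bigr)\Bigr),
\]
using that $n$ and $n-1$ are coprime, so the two conditions are independent by CRT. Because $\sum_p 1/p=\infty$, I will choose finitely many primes $p_1,\dots,p_r$ with $\prod_i(1-1/p_i)<\varepsilon/2$ and set $M=\prod_i p_i$. By Dirichlet's theorem there are infinitely many primes $n\equiv1\pmod M$; I pick one with $1/n<\varepsilon/2$. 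Then the density is at least $1-\varepsilon$.

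\textbf{Main obstacle.} The delicate step is Step 3: pinning down the functional-graph facts for $k$-th powers and handling the depth argument precisely. This is exactly where both hypotheses $k\ge n-1$ and $\gcd(k,n-1)>1$ are used.
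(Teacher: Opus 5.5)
Your proposal is correct and follows essentially the same route as the paper. The paper likewise shows $z\in\gF_k$ via the inverse of $k$ modulo $n$, uses $k\ge n-1$ to force every $k$-fold iterate to have height at most $1$ (so the only surviving spurious interpolant is the ``skip-one'' map $\tilde z$, i.e.\ your case $L=n-1$), excludes $\tilde z$ by the cycle-splitting fact when $\gcd(k,n-1)>1$, and obtains the density bound by choosing a prime $n\equiv 1 \pmod M$ via Dirichlet with $\varphi(M)/M<\varepsilon/2$. The differences are only organizational: you merge these lemmas into a single depth argument using $\lceil d/k\rceil$, and you compute the density exactly via CRT, whereas the paper uses the union bound $1-\frac1n-\frac{\varphi(n-1)}{n-1}$.
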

\noindent\textbf{Proof road map.}
The proof of Theorem~\ref{thm:high_density_unique_ood_clean} is deferred to Appendix~\ref{apdx: extension_cycle}.
We first establish a structural uniqueness criterion for good interpolants at fixed \(n\) and \(k\)
(Thm.~\ref{thm:unique-good}), using functional-graph properties of large iterates and a characterization of
interpolants realizable as \(k\)-fold iterates.
Intuitively, \(\gcd(k,n)\!=\!1\) ensures that the target \(z\) lies in \(\gF_k\), while \(\gcd(k,n\!-\!1)\!>\!1\) eliminates the only spurious interpolant, forcing uniqueness.
We then use density calculations and Dirichlet’s theorem on primes in arithmetic progressions to pick a prime \(n\)
for which the gcd conditions hold for a set of exponents \(k\!\ge \!n\!-\!1\) with density at least \(1\!-\!\varepsilon\).\looseness=-1

As an example, for $\epsilon\!=\!0.25$, we can take \(n\!=\!2311\). Then Thm~\ref{thm:high_density_unique_ood_clean} implies that for at least a \(0.75\) fraction of integers \(k\!\geq 2310\),
learning over \(\gF_k\) achieves zero error both ID and OOD.
In contrast, when \(k\!=\!1\), Prop.~\ref{prop:cycle_k1_failure} shows that there exists an ID minimizer with maximal OOD error \(L_{\gQ}\!=\!1\).\looseness=-1

\subsubsection{Example: In-context coefficient prediction with a linear transformer}
\label{sec:icgd_infinite_context_linreg}

We present an example of a linear transformer trained to perform in-context coefficient prediction \cite{huang2025transformers} for linear regression, trained with multiple self-iterations before producing its final prediction. We show how increasing the number of self-iterations shifts the induced hypothesis class toward a generic (in-context) gradient descent solver, yielding OOD improvements even when the ID loss remains zero. We note that our notation is tailored to mimic autoregressive generation, but in this continuous token-space setting it can be equivalently rewritten in a form more reminiscent of looped transformers \cite{gatmiry2024can}.
\looseness=-1

\textbf{Input sequence structure.}
Each input sequence encodes a linear regression problem identified by a tuple $(\mSigma,\bm{\beta})$, where $\vx \!\sim\! \gN(0,\mSigma)$ and $y\!=\!\vx^\top \bm{\beta}$.
Given $(\mSigma,\bm{\beta})$, we sample $n$ i.i.d.\ pairs $\{(\vx_i,y_i)\}_{i=1}^n$ and assemble them into an input
matrix $\mS^{(n)}_{\mSigma,\bm{\beta}}\in \sR^{d_e\times (n+1)}$ with $d_e=2d+2$ as follows.
Let $\mX=[\vx_1,\dots,\vx_n]\in\sR^{d\times n}$ and $\vy=(y_1,\dots,y_n)^\top\in\sR^n$. Define \looseness=-1
\begin{equation}\label{eq:linreg_sequence}
\vspace{-.1cm}
\setlength{\arraycolsep}{4pt}
\renewcommand{\arraystretch}{0.9}
\mS^{(n)}_{\mSigma,\bm{\beta}}
\coloneqq S\!\left(\{(\vx_i,y_i)\}_{i=1}^n\right)
\coloneqq
\begin{bmatrix}
\tfrac{1}{\sqrt{n}}\mX & \mathbf{0}_{d\times 1}\\
\tfrac{1}{\sqrt{n}}\vy^\top & 0 \\
\mathbf{0}_{d\times n} & \mathbf{0}_{d\times 1}\\
\mathbf{0}_{1\times n} & 1
\end{bmatrix}.
\end{equation}
The last column is a ``query'' token with a constant $1$ in the final coordinate. Note that
$\mS^{(n)}_{\mSigma,\bm{\beta}}$ is a random variable conditioned on $(\mSigma,\bm{\beta})$, while $S(\cdot)$ is a deterministic function of the sampled dataset.
The goal is to take \( \mS^{(n)}_{\mSigma,\bm{\beta}} \) as the prompt and infer \( \bm{\beta} \) from the in-context examples.

\textbf{Linear transformer with reduced weights.}
We consider a one-layer linear self-attention module with a residual connection and reduced weights
(similar to that in \cite{huang2025transformers}). For any input matrix $\mM\in\sR^{d_e\times \ell}$, define\looseness=-1
\begin{align}
\vspace{-.2cm}
\label{eq:linear_transformer_T}
T(\mM\mid \mV,\mW)
= \mM + \tilde{\mV}\,\mM \mM^\top \tilde{\mW}\,\mM,
\end{align}
where the only trainable blocks are $\mV,\mW\in\sR^{d\times d}$ and
\begin{align}
\nonumber
\tilde{\mV} &
=
\begin{bmatrix}
\mathbf{0}_{(d+1)\times d} & \mathbf{0}_{(d+1)\times (d+2)} \\
\mV & \mathbf{0}_{d\times (d+2)}\\
\mathbf{0}_{1\times d} & \mathbf{0}_{1\times (d+2)}
\end{bmatrix},
\end{align}
\begin{align}
\nonumber
\tilde{\mW}
& =
\begin{bmatrix}
\mathbf{0}_{d\times (d+1)} & \mW & \mathbf{0}_{d\times 1} \\
\mathbf{0}_{1\times (d+1)} & \mathbf{0}_{1\times d} & -1 \\
\mathbf{0}_{(d+1)\times (d+1)} & \mathbf{0}_{(d+1)\times d} & \mathbf{0}_{(d+1)\times 1}
\end{bmatrix}.
\end{align}
We write the last token (last column) of the output as
\begin{align}
T(\mM\mid \mV,\mW)_{[:,\ell]}
= \mM_{[:,\ell]} + \tilde{\mV}\,\mM \mM^\top \tilde{\mW}\,\mM_{[:,\ell]}.
\end{align}

\textbf{Autoregressive generation and extraction.}
We define a one-step autoregressive extension by appending the updated last token. Given $(\mV,\mW)$, for any $\mM\in\sR^{d_e\times \ell}$:\looseness=-1
\begin{align}
\nonumber
\Auto(\mM\mid \mV,\mW)
=
\begin{bmatrix}
\mM \quad T(\mM\mid \mV,\mW)_{[:,\ell]}
\end{bmatrix}. 
\end{align}
We also define an extraction map $\Extract(\cdot)$, which, conceptually, abstracts how a model’s final answer is obtained
from its generated sequence, enabling direct comparison with the reference answer. In our case, we specialize it to
\begin{align}
\label{eq:extract_def}
\Extract(\mM) \;=\; \mM_{[d+2:2d+1,\,\ell]}\in\sR^d.
\end{align}
i.e., we extract rows $d+2,\dots,2d+1$ of the last token. Given a generation budget $k$, the induced predictor is
\begin{align}
\label{eq:fk_def}
f_k(\cdot\mid \mV,\mW) \;=\; \Extract\circ \Auto^{\circ k}(\cdot\mid \mV,\mW),
\end{align}
whose output lies in the same space as $\bm{\beta}$, the target coefficients we aim to predict.

\textbf{In-context GD-implementing parameters.}
Given a dataset $\cC\!=\!\{(\vx_i,y_i)\}_{i=1}^n$, consider the squared loss $
R(\vw;\cC)\!\coloneqq \!\frac{1}{n}\sum_{i=1}^n(\vx_i^\top \vw-y_i)^2$. For a step size $\eta\!>\!0$, one-step gradient descent (GD) on $R$ is $
\vw_{t+1}=\vw_t-\eta\nabla R(\vw_t;\cC)$. On the other hand, given $\cC$, we form the input sequence $S(\cC)$ as in \eqref{eq:linreg_sequence} and run the autoregressive model.
For parameters $(\mV,\mW)$, define the extracted prediction at generation step $t$ by\looseness=-1
\[
\hat{\vw}_t(\cC;\mV,\mW)\coloneqq \Extract\!\left(\Auto^{\circ t}\big(S(\cC)\mid \mV,\mW\big)\right).
\]
We say that $(\mV,\mW)$ \emph{implement GD updates} if there exists a step size $\eta>0$ such that, for every dataset $\cC$ and every $t\!\ge\! 0$,\looseness=-1
\[
\hat{\vw}_{t+1}(\cC;\mV,\mW)
=
\hat{\vw}_t(\cC;\mV,\mW)-\eta\nabla R\!\left(\hat{\vw}_t(\cC;\mV,\mW);\cC\right).
\]
Let $\Theta_{\rm GD}$ denote the set of such \emph{generic} GD-implementing $(\mV,\!\mW)$ pairs (i.e., the above equality holds uniformly over all datasets).
Appendix~\ref{subsec:proof_two_point_iso_main} shows that this is equivalent to\looseness=-1
\[
\label{eq:theta_gd_def}
\Theta_{\rm GD}
=\Big\{\,(-\eta \mI_d,\ \mI_d)\ \big|\ \eta>0\,\Big\}.
\]

\textbf{Infinite-context limit and loss.}
For each $n$, consider the model's prediction $f_k(\mS^{(n)}_{\mSigma,\bm{\beta}}\mid \mV,\mW)$ on the input sequence $\mS^{(n)}_{\mSigma,\bm{\beta}}$.
We study the infinite-context limit $n\to\infty$ for any finite $k$.
Under our data model, the empirical moments in
$\mS^{(n)}_{\mSigma,\bm{\beta}}\mS^{(n)\top}_{\mSigma,\bm{\beta}}$ converge to their population counterparts, implying that
$f_k(\mS^{(n)}_{\mSigma,\bm{\beta}}\mid \mV,\mW)$ converges almost surely to a deterministic limit conditioned on $(\mSigma,\bm{\beta})$ (see Appendix~\ref{subsec:infinite_context_explicit}). For a distribution $\gD$ over $(\mSigma,\bm{\beta})$, we then define the infinite-context population loss as\looseness=-1
\begin{align}
\nonumber
L_{\gD}(f_k)
\;:=\;
\E_{(\mSigma,\bm{\beta})\sim\gD}\;
\big\| \lim_{n\to\infty} f_k(\mS^{(n)}_{\mSigma,\bm{\beta}}\mid \mV,\mW)-\bm{\beta}\big\|^2.
\end{align}

\textbf{Function class.}
For each generation budget $k$, define
\[
\gF_k
:=\big\{ f_k(\cdot\mid \mV,\mW)\,:\, \SimDiag_{\sR}(\mV,\mW)\big\},
\]
where $\SimDiag_{\sR}(\mV,\mW)$ means that $\mV,\mW\in\sR^{d\times d}$ are simultaneously orthogonally diagonalizable over $\sR$.
We impose this restriction to simplify the analysis.
Note that this restriction still contains the entire GD-implementing family.\looseness=-1

\textbf{Data distributions.}
$\gP$ and $\gQ$ are different distributions over $(\mSigma,\bm{\beta})$.
We assume $\mSigma$ and $\bm{\beta}$ are independent under both distributions, and $\bm{\beta}\sim \gN(0,\mI_d)$.
Under $\gP$, $\mSigma$ is uniformly distributed over $\{\gamma_1\mI_d,\gamma_2\mI_d\}$ with
$\gamma_1,\gamma_2>0$ and $\gamma_1\neq\gamma_2$.
Under $\gQ$, $\mSigma=\gamma_{\gQ}\mI_d$ for some $\gamma_{\gQ}\in(0,\gamma_1+\gamma_2)\setminus\{\gamma_1,\gamma_2\}$.

\textbf{A mechanistic explanation: self-iteration steers learning toward an in-context GD solver.} We focus on even $k$. For the ID task, we can always achieve a perfect fit for any even \(k\ge 2\). However, the resulting OOD loss decreases strictly as $k$ increases, so larger $k$ provides a real benefit that is not visible from ID performance alone. Our analysis also provides a mechanistic explanation: as $k$ increases, ID minimizers behave increasingly like a generic GD solver, illustrating how self-iteration shapes the hypothesis class to favor more generalizable behavior. See proof in Appendix \ref{subsec:proof_two_point_iso_main}.\looseness=-1
\begin{theorem}
\label{thm:two_point_iso}
We consider even $k\ge 2$. In the above setting, let
$f_k^\star\in\argmin_{f\in\gF_k} L_{\gP}(f)$ be any minimizer on $\gP$.
Let $
a:=\frac{\gamma_1+\gamma_2}{2}$, $ b:=\frac{|\gamma_1-\gamma_2|}{2}$.
Then:
\begin{enumerate}
\item \textbf{(ID fit is always perfect)} $L_{\gP}(f_k^\star)=0$.

\item \textbf{(Increasing $k$ improves OOD loss)} We have $
L_{\gQ}(f_k^\star)
= d\left(\frac{(a-\gamma_{\gQ})^k-b^k}{a^k-b^k}\right)^2$, which decreases strictly as $k$ increases over even integers.\looseness=-1

\item \textbf{(ID minimizers become increasingly similar to an in-context GD solver as $k$ grows)} Define the parameter distance $ \dist\big((\mV,\mW),\Theta_{\rm GD}\big) \!\coloneqq\!\inf_{(\mV',\mW')\in\Theta_{\rm GD}}\Big(\|\mV\!-\!\mV'\|_F^2+\|\mW\!-\!\mW'\|_F^2\Big)$. Then, for every ID minimizer $f_k^\star$ with parameters $(\mV^\star,\mW^\star)$, $
\dist\big((\mV^\star,\mW^\star),\Theta_{\rm GD}\big)
= d\left(\frac{b}{a}\right)^{2k}$,
which decreases as $k$ increases over even integers.\looseness=-1
\end{enumerate}
\end{theorem}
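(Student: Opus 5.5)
The plan is to reduce the looped linear transformer, in the infinite-context limit, to an explicit linear recursion on the extracted vector. Under the simultaneous-diagonalizability constraint this recursion decouples into scalar recursions, one per eigen-coordinate. All three claims are then closed-form computations on these scalar recursions.

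\textbf{Step 1: the recursion.} I write the last token at generation step $t$ as $(\mathbf 0_d,0,\hat\vw_t,1)$, with $\hat\vw_0=\mathbf 0$. The $\Extract$ rows are the only rows written by $\tilde\mV$, so every appended token has zeros in its $\vx$- and $y$-rows. Consequently, in $\mM\mM^\top$ the appended tokens contribute only to the $\vw$-block and the last-coordinate block, and never to the $\vx$-rows. Next, $\tilde\mW\mM_{[:,\ell]}=(\mW\hat\vw_t,-1,\mathbf 0)$, and $\tilde\mV$ reads only the $\vx$-rows of $\mM\mM^\top\tilde\mW\mM_{[:,\ell]}$. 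The data columns of $\mM\mM^\top$ converge almost surely to $\mSigma$, $\mSigma\bm\beta$ and $\bm\beta^\top\mSigma\bm\beta$ (as in Appendix~\ref{subsec:infinite_context_explicit}). Combining these facts, and using the residual connection, gives the limiting update
\[
\hat\vw_{t+1}=\hat\vw_t+\mV\mSigma(\mW\hat\vw_t-\bm\beta).
\]
The same computation identifies $\Theta_{\rm GD}$: matching this update to $\hat\vw_t-\eta\nabla R$ for all datasets (and hence all empirical moments) forces $\mW=\mI_d$ and $\mV=-\eta\mI_d$, up to absorbing the factor $2$ into $\eta$.

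\textbf{Step 2: solving the ID problem.} With $\mSigma=\gamma\mI_d$ and common eigenvectors of $\mV,\mW$ with eigenvalues $v_i,w_i$, set $c_i=v_iw_i$. Each coordinate then satisfies $u_{t+1}=(1+\gamma c_i)u_t-\gamma v_i\beta_i$. If $c_i=0$, then $u_k=-k\gamma v_i\beta_i$, which cannot equal $\beta_i$ for both $\gamma_1\neq\gamma_2$. If $c_i\neq 0$, then
\[
u_k=\frac{\beta_i}{w_i}\bigl(1-(1+\gamma c_i)^k\bigr).
\]
Zero ID loss requires $(1+\gamma_1c_i)^k=(1+\gamma_2c_i)^k=1-w_i$. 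Since $k$ is even and $\gamma_1\neq\gamma_2$, this forces $1+\gamma_1c_i=-(1+\gamma_2c_i)$, so $c_i=-1/a$ and $1+\gamma_jc_i=\pm b/a$. Hence $w_i=1-(b/a)^k$ and $v_i=c_i/w_i$. This shows that a perfect fit exists, proving (i). It also shows that \emph{every} minimizer has $\mW^\star=(1-(b/a)^k)\mI_d$ and $\mV^\star=-\tfrac{1}{a(1-(b/a)^k)}\mI_d$, so all minimizers define the same predictor.

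\textbf{Step 3: OOD loss and distance.} Plugging $\gamma_{\gQ}$ into the closed form and using $\E\|\bm\beta\|^2=d$ gives the stated expression for $L_{\gQ}$. For the distance, I choose $\eta=-v>0$; this is valid because $w>0$ and $c<0$. This makes the $\mV$ term vanish and leaves $\|\mW^\star-\mI_d\|_F^2=d(b/a)^{2k}$, proving (iii). This quantity is decreasing in $k$ because $b<a$.

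\textbf{Main obstacle: strict monotonicity in (ii).} Set $x=|a-\gamma_{\gQ}|/a$ and $y=b/a$. Then $x,y\in[0,1)$ because $\gamma_{\gQ}\in(0,2a)$, and $x\neq y$ because $\gamma_{\gQ}\notin\{\gamma_1,\gamma_2\}$. For even $k$,
\[
\sqrt{L_{\gQ}/d}=\frac{|x^k-y^k|}{1-y^k}=\Bigl|1-\frac{1-x^k}{1-y^k}\Bigr|.
\]
I would show that $\phi(k)=(1-x^k)/(1-y^k)$ moves monotonically toward $1$: it increases when $x>y$ and decreases when $x<y$, while always staying on the same side of $1$. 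To do this I would treat $k$ as continuous and differentiate $\log\phi$. This reduces the claim to monotonicity of $s\mapsto s^k\log s/(1-s^k)$ on $(0,1)$, which I expect to follow from the elementary inequality $\log(1/u)\ge 1-u$ or from convexity. The degenerate case $x=0$ would be handled directly.
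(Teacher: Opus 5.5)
Your proposal is correct and follows essentially the same route as the paper: the same infinite-context recursion $\hat\vw_{t+1}=\hat\vw_t+\mV\mSigma(\mW\hat\vw_t-\bm\beta)$, the same per-coordinate reduction via simultaneous diagonalization, the same use of even $k$ to force $1+\gamma_1 vw=-(1+\gamma_2 vw)$, and the same choice $\eta=-v$ for the distance. The one variation is the monotonicity step. You differentiate $\log\bigl((1-x^k)/(1-y^k)\bigr)$ and reduce to $\log u\le u-1$, whereas the paper differentiates $(e^{-\alpha t}-e^{-\beta t})/(1-e^{-\beta t})$ directly; your separate treatment of $x=0$ (i.e.\ $\gamma_{\gQ}=a$) is a genuine improvement, since the paper's claim $r\in(0,1)$ silently excludes that admissible case.
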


\subsection{Regularization Can Yield OOD Gains When Shortcuts Persist in the Hypothesis Class}
\label{subsec:decomp_shortcut}

When discussing OOD generalization, a commonly invoked concept is that of a
\emph{shortcut}. In general, a shortcut refers to a pattern that a model can exploit to fit the target on the
training distribution \( \mathcal{P} \), but which is not reliable or exploitable on an unseen distribution
\( \mathcal{Q} \) \cite{geirhos2020shortcut}.
From this perspective, in the setting analyzed in Section~\ref{subsec:iteration_reduce}, one can view
shortcut solutions as eventually disappearing as we increase \(k\). However, shortcuts may \emph{persist}
in the hypothesis class even as we iterate. We therefore characterize another mechanism that can still
yield OOD gains even when ID performance does not improve. The key requirement is that the learning
algorithm employs an appropriate regularization—one that penalizes shortcut solutions more strongly than
the structured, generalizable component. We formalize this mechanism below.\looseness=-1

Let \( \mathcal{P}, \mathcal{Q} \) be distributions over \( \mathcal{X} \), and let
\( z : \mathcal{X} \to \mathcal{Y} \) be a fixed target function, where \( \mathcal{Y} \) is a (real) Hilbert space.
For any measurable \( f : \mathcal{X} \to \mathcal{Y} \) and any distribution \( \mathcal{D} \) over \( \mathcal{X} \), define $
L_{\mathcal{D}}(f) \coloneqq \mathbb{E}_{x\sim\mathcal{D}}\|f(x)-z(x)\|^2$. 

We assume the hypothesis class admits a decomposition into a structured part \( \mathcal{G}_k \) and a shortcut part \( \mathcal{H}_k \):
\begin{equation}
\label{eq:decomp}
\mathcal{F}_k \;=\; \{\, g + h \;:\; g \in \mathcal{G}_k,\; h \in \mathcal{H}_k \,\},
\end{equation}
with \( 0 \!\in\! \mathcal{G}_k \), \( 0 \!\in\! \mathcal{H}_k \).
We learn on \( \mathcal{P} \) with regularization \(R\):\looseness=-1
\begin{equation}
\label{eq:erm}
\begin{split}
(\hat g,\hat h)\in
\argmin_{g\in\mathcal{G}_k,\; h\in\mathcal{H}_k}
\Big( L_{\mathcal{P}}(g+h) + R(g,h)\Big),
\end{split}
\end{equation}
with \(R(g,h)\!\coloneqq\! R_G(g)\!+\! R_H(h)\) \footnote{
For a clean presentation, we assume the regularizer decomposes additively. More generally, one can allow a bounded cross-term and obtain analogous results with extra terms scaling with its bound.\looseness=-1.} Here,
\( R_G,R_H \) are nonnegative functionals with \( R_G(0)\!=\!R_H(0)\!=\!0 \).
We evaluate the learned predictor \( \hat f \coloneqq \hat g+\hat h \).

\begin{assumption}\label{as:decomp}
We formalize \( \mathcal{G}_k \) as a structured, well-constrained family that behaves similarly on ID and OOD data, and \( \mathcal{H}_k \) as a weakly constrained family that can exploit shortcuts. We also make assumptions on the regularization.\looseness=-1

\textbf{Well-structured part: distribution shift control on \( \mathcal{G}_k \).}
There exists \( \gamma\ge 1 \) such that for all \( g\in\mathcal{G}_k \),
\begin{equation}
\label{eq:shift_control_G}
\frac{1}{\gamma} L_{\mathcal{P}}(g)\ \le\ L_{\mathcal{Q}}(g)\ \le\ \gamma L_{\mathcal{P}}(g),
\end{equation}
capturing the idea that \( \mathcal{G}_k \) contains transferable mechanisms.\looseness=-1

\textbf{Shortcut part: failure of OOD residual fit in \( \mathcal{H}_k \).}
For any minimizer \((\hat g,\hat h)\) of \eqref{eq:erm}, define the residual
\(
\hat r \coloneqq z-\hat g.
\)
We assume there exists \(C>0\) such that
\begin{equation}
\label{eq:shortcut_failure}
\mathbb E_{x\sim\mathcal Q}\|\hat h(x)-\hat r(x)\|^2
\;\ge\;
C\,\mathbb E_{x\sim\mathcal Q}\|\hat r(x)\|^2 .
\end{equation}
Intuitively, \( \mathcal H_k \) captures non-transferable ``shortcut'' mechanisms: although \( \hat h \) may help
reduce the objective on \( \mathcal P \), it cannot predict the remaining target residual
\( \hat r=z-\hat g \) on \( \mathcal Q \) beyond a constant-factor improvement over doing nothing.\looseness=-1

\textbf{Regularization with the right bias.}
In practice, learning typically induces strong regularization. Beyond explicit regularization such as weight decay
and the implicit bias of gradient-based optimization, popular training algorithms introduce substantial additional
regularization---e.g., RL commonly uses KL constraints, objective clipping, and trust-region updates. We hypothesize
that these effects bias learning toward more structured solutions, in the sense that they effectively control the
shortcut component without overly penalizing the structured component.\looseness=-1

\textbf{(i) Norm control for the shortcut regularizer.}
There exist constants \( a_{\mathcal P},a_{\mathcal Q}\ge 0 \) such that for all \( h\in\mathcal{H}_k \),
\begin{equation}
\label{eq:norm_control}
\mathbb{E}_{x\sim \mathcal D}\|h(x)\|^2 \le a_\mathcal D\,R_H(h),
\qquad \mathcal D\in\{\mathcal P,\mathcal Q\}.
\end{equation}
A simple example is a norm-based regularization, e.g., \( R_H(h)\propto \|h\|_{\mathcal H}^2 \) for an appropriate
function norm.\looseness=-1

\textbf{(ii) Bounded regularization for a structured fit.} The regularization does not overly penalize the structured component \( \mathcal G_k \): there exists \( \rho_k\!\ge\! 0 \) and
\(g_k^\star\in\argmin_{g\in\mathcal G_k} L_{\mathcal P}(g)\) s.t.\looseness=-1
\begin{equation}
\label{eq:small_reg_g}
R_G(g_k^\star)\le \rho_k,
\end{equation}
\end{assumption}

\textbf{Decoupled scaling of ID and OOD performance.}
In this decomposition view, ID performance is primarily governed by how well the shortcut component can fit the
training distribution, whereas OOD performance is governed—both from below and above—by how well the structured
component can fit, as shown below.\looseness=-1
\begin{theorem}\label{thm:main} Under Assumption~\ref{as:decomp}, define $ \epsilon_k \!\coloneqq\! \min_{g\in\mathcal{G}_k} L_{\mathcal P}(g)$, which measures how well the structured component \(\mathcal G_k\) can fit the target in-distribution, and define $\delta_{k} \!\coloneqq\! \min_{h\in\mathcal H_k}\Bigl(L_{\mathcal P}(h)\!+\!\,R_H(h)\Bigr)$, which measures the best regularized ID objective achievable using only the shortcut component. For any ID minimizer \(\hat f\) (\eqref{eq:erm}): \looseness=-1 \begin{enumerate} \vspace{-.2cm} \item \textbf{ID upper bound.} \(L_{\mathcal P}(\hat f)\le \delta_{k}\). \vspace{-.2cm} \item \textbf{OOD sandwich bound.} $ C_1\,\epsilon_k \;\le\; L_{\mathcal Q}(\hat f) \;\le\; C_2\,\bigl(\epsilon_k+ \rho_k\bigr)$, where \(C_1\coloneqq \frac{C}{\gamma}\) and $ C_2 \coloneqq 4\gamma + 4\gamma a_{\mathcal P}+2a_{\mathcal Q}$. \end{enumerate} \vspace{-.2cm} \end{theorem}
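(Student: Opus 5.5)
The whole argument rests on comparing the objective value of the minimizer $(\hat g,\hat h)$ in \eqref{eq:erm} with two feasible competitors: $(0,h)$ for any $h\in\mathcal H_k$, and $(g_k^\star,0)$ from Assumption~\ref{as:decomp}(ii). Both are admissible because $0\in\mathcal G_k$, $0\in\mathcal H_k$ and $R_G(0)=R_H(0)=0$. Write $J(g,h)\coloneqq L_{\mathcal P}(g+h)+R_G(g)+R_H(h)$.

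\textbf{ID upper bound.} Since the regularizers are nonnegative, $L_{\mathcal P}(\hat f)\le J(\hat g,\hat h)$. Optimality of $(\hat g,\hat h)$ gives $J(\hat g,\hat h)\le J(0,h)=L_{\mathcal P}(h)+R_H(h)$ for every $h\in\mathcal H_k$. Taking the minimum over $h$ yields $L_{\mathcal P}(\hat f)\le\delta_k$.

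\textbf{OOD lower bound.} Write $\hat f-z=\hat h-\hat r$ with $\hat r=z-\hat g$. Then $L_{\mathcal Q}(\hat f)=\E_{\mathcal Q}\|\hat h-\hat r\|^2$, and \eqref{eq:shortcut_failure} bounds this below by $C\,\E_{\mathcal Q}\|\hat r\|^2=C\,L_{\mathcal Q}(\hat g)$. By \eqref{eq:shift_control_G}, $L_{\mathcal Q}(\hat g)\ge L_{\mathcal P}(\hat g)/\gamma$. Since $\hat g\in\mathcal G_k$, we have $L_{\mathcal P}(\hat g)\ge\epsilon_k$. Chaining these gives $L_{\mathcal Q}(\hat f)\ge (C/\gamma)\epsilon_k$.

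\textbf{OOD upper bound.} This is the main step; it requires tracking constants carefully.
\begin{enumerate}
\item Compare with $(g_k^\star,0)$:
\[
J(\hat g,\hat h)\le L_{\mathcal P}(g_k^\star)+R_G(g_k^\star)\le\epsilon_k+\rho_k .
\]
Hence each of $L_{\mathcal P}(\hat f)$ and $R_H(\hat h)$ is at most $\epsilon_k+\rho_k$.
\item By \eqref{eq:norm_control}, $\E_{\mathcal D}\|\hat h\|^2\le a_{\mathcal D}(\epsilon_k+\rho_k)$ for $\mathcal D\in\{\mathcal P,\mathcal Q\}$.
\item Control $\hat g$ in-distribution. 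Since $\hat g-z=(\hat f-z)-\hat h$, the inequality $\|u-v\|^2\le 2\|u\|^2+2\|v\|^2$ gives
\[
L_{\mathcal P}(\hat g)\le 2L_{\mathcal P}(\hat f)+2\E_{\mathcal P}\|\hat h\|^2\le 2(1+a_{\mathcal P})(\epsilon_k+\rho_k).
\]
\item Transfer to $\mathcal Q$ via \eqref{eq:shift_control_G}: $L_{\mathcal Q}(\hat g)\le 2\gamma(1+a_{\mathcal P})(\epsilon_k+\rho_k)$.
\item Since $\hat f-z=(\hat g-z)+\hat h$, applying the same inequality once more on $\mathcal Q$ gives
\[
L_{\mathcal Q}(\hat f)\le 2L_{\mathcal Q}(\hat g)+2\E_{\mathcal Q}\|\hat h\|^2\le \bigl(4\gamma+4\gamma a_{\mathcal P}+2a_{\mathcal Q}\bigr)(\epsilon_k+\rho_k).
\]
\end{enumerate}
This matches $C_2$ exactly, so no sharper splitting is needed.

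The only delicate point is existence of the minima defining $\epsilon_k$ and $\delta_k$. These are taken as given in the statement; otherwise one replaces them by infima and works with near-minimizers, and the same bounds follow by a limiting argument.
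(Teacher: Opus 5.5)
Your proposal is correct and is essentially the paper's proof. Both compare the minimizer against the feasible pairs $(0,h)$ and $(g_k^\star,0)$, chain the shortcut-failure and shift-control assumptions for the lower bound, and obtain exactly $C_2$ for the upper bound via two applications of $\|u+v\|^2\le 2\|u\|^2+2\|v\|^2$ combined with norm control.
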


Consider a regime in which the shortcut channel \( \mathcal{H}_k \) is highly expressive and achieves a small regularized ID objective even for small \(k\), so that \( \delta_{k} \) decays rapidly. In contrast, the structured channel \( \mathcal{G}_k \) may require larger \(k\) for its expressive power to manifest, so \( \epsilon_k \) decays much more slowly. In this regime, the theorem implies that the ID error---upper bounded by \( \delta_{k} \)---can decrease quickly and plateau early, while the OOD error---lower bounded by \(\epsilon_k\) up to constants---can improve no faster than the rate at which \(\epsilon_k\) decreases.\looseness=-1

\textbf{Role of regularization in the competition between $\mathcal G_k$ and $\mathcal H_k$.}
The tightness of the OOD upper bound in Thm.~\ref{thm:main} is affected by \(C_2\) and  \(\rho_k\).
A tighter bound favors (i) sufficiently strong regularization on the shortcut channel, reflected by reasonable
norm-control constants \(a_{\mathcal P},a_{\mathcal Q}\), and (ii) mild
regularization on the structured channel, reflected by a small \(\rho_k\) (i.e., a small penalty on the structured fit). Mechanistically, as \(k\) grows and \(\mathcal G_k\) becomes expressive
enough to compete with shortcut fits, this regularization profile suppresses the shortcut component \(\hat h\)
while leaving a low-complexity structured fit available, biasing minimizers toward \(\mathcal G_k\). As a result,
reliance on \(\mathcal H_k\) decreases and OOD performance can continue to improve even after the ID error has
plateaued.\looseness=-1

\subsubsection{A concrete example with slow structured gains and fast shortcuts}\label{subsec:construction_shortcut}

For the purpose of illustrating the general result in Theorem~\ref{thm:main}, we present an admittedly stylized instantiation designed to make the structured/shortcut decomposition and the role of regularization transparent. Although analyzing highly complex models remains challenging at present, the same qualitative behavior is empirically supported in Sec.~\ref{sec:exp}.\looseness=-1

For a sequence \(s=(e_1,\dots,e_T)\), let \(s[i]\) denote its \(i\)-th element and \(s[-1]\) its last element; when \(T=1\), we identify \(s\) with \(e_1\) when the context is clear. Each element in the sequence is a token represented as a vector in \(\mathbb R^d\), and the prompt is a single token \(x\in\mathbb R^d\). For a vector \(\mathbf v\in\mathbb R^d\), let \(\mathbf v_{i:j}\) denote coordinates \(i\) through \(j\).\looseness=-1

\textbf{An autoregressive model.}
We define \(\mathrm{Auto}(s)\!\coloneqq \!\mathrm{Append}(s,\,N(s))\), where \(N\) is a next-token generator that maps the current sequence to the next token.
After \(k\) steps, the model outputs \(f_k(x)\coloneqq \mathrm{Extract}(\mathrm{Auto}^{\circ k}(x))\), where \(\mathrm{Extract}(\cdot)\) maps a token sequence to \(\mathcal Y\).

\textbf{An iterative channel and a non-iterative channel.}
Next, we specify the next-token generator \( N \).
Let \( a \in \mathcal{A} \) be a function \( \mathbb{R}^p \to \mathbb{R}^p \) and
\( b \in \mathcal{B} \) be a function \( \mathbb{R}^p \to \mathbb{R}^l \).
Assume \looseness=-1
\begin{equation}
\label{eq:next_token_ab}
    N(s)_{1:p} = a\bigl(s[-1]_{1:p}\bigr), \; N(s)_{p+1:p+l} = b\bigl(s[1]_{1:p}\bigr).
\end{equation}
with all other coordinates arbitrary as they play no role in the construction. Thus \(a\) is iterated across steps (so its expressivity can grow with \(k\)), while \(b\) depends only on the initial token and does not benefit from additional iterations. They write to disjoint coordinate blocks, so they do not interfere.\looseness=-1

\textbf{Resulting function class decomposition.}
Let \( \psi_1:\mathbb{R}^p \to \mathcal{Y} \) and \( \psi_2:\mathbb{R}^l \to \mathcal{Y} \). We define the extraction rule\looseness=-1
\[
\mathrm{Extract}(s)
\coloneqq
\psi_1\!\bigl(s[-1]_{1:p}\bigr) + \psi_2\!\bigl(s[-1]_{p+1:p+l}\bigr).
\]
The prediction induced by the model is then given by\looseness=-1
\begin{equation}
\label{eq:fk_decompose_ab}
f_k(x) = \psi_1\!\Bigl(a^{\circ k}\bigl(x_{1:p}\bigr)\Bigr) + \psi_2\!\Bigl(b\bigl(x_{1:p}\bigr)\Bigr).
\end{equation}
The induced hypothesis class decomposes as
\[
\mathcal{G}_k = \{\, \psi_1 \circ a^{\circ k} \circ \pi : a \in \mathcal{A} \,\},
\;
\mathcal{H}_k = \{\, \psi_2 \circ b \circ \pi : b \in \mathcal{B} \,\}.
\vspace{-.1cm}
\]

\textbf{Data.} Let \( \mathcal{Y}=\mathbb{R} \) and \( l=1 \).
Let the target function \( z \) be linear in the first \( p \) coordinates:\looseness=-1
\[
z(x)=\pi(x)^\top\theta,
\qquad
\theta=(\theta_1,\dots,\theta_p)^\top \in \mathbb{R}^p_{\ge 0}.
\]
Let \(\mathcal U\coloneqq\{\pm 1\}^p\) be the hypercube, with two disjoint subsets
\[
\mathcal U_P\coloneqq\{u\in\mathcal U:\ u_1=+1\},\;
\mathcal U_Q\coloneqq\{u\in\mathcal U:\ u_1=-1\}.
\]
We define distributions on inputs \(x\in\mathbb R^d\) by first sampling \(u\!\in\!\mathbb R^p\), then setting \(x_{1:p}=u\); the remaining coordinates can be arbitrary and play no role in the construction.
Let \(\mathcal P\) be the distribution obtained by sampling \(u\!\sim\!\mathrm{Unif}(\mathcal U_P)\), and let
\(\mathcal Q\) be obtained by sampling \(u\!\sim\!\mathrm{Unif}(\mathcal U_Q)\).\looseness=-1

\textbf{Functions.} Let \( \psi_1(v)\!=\!\mathbf 1^\top v \) for \( v\!\in\!\mathbb{R}^p \) and \( \psi_2(t)\!=\!t \) for \( t\!\in\!\mathbb{R} \).
We define $\gA$ and $\gB$ as follows.\looseness=-1

\emph{(i) A structured channel that improves with \(k\).}
Define the structured class \( \mathcal{A} \) by diagonal linear maps:\looseness=-1
\[
\mathcal{A}\coloneqq \Bigl\{\, a(u)=\mathrm{diag}(w_1,\dots,w_p)\,u \ \big|\ 0\le w_i\le\tau\ \text{for all }i \Bigr\},
\]
where \(\tau\!>\!1\). \( \mathcal{A} \) represents a gradually improving structured mechanism: it is well-aligned with the target \(z\) (both are linear maps), but may be unable to represent \(z\) accurately under a small $k$. As \(k\) increases, repeated composition increases  \(\tau^k\) and enables increasingly accurate approximation of \( z \). \looseness=-1

\emph{(ii) An overly expressive shortcut channel.} Let \(\mathcal B\) be the RKHS on \(\mathcal U\) induced by the Kronecker delta kernel \(K(u,u')=\mathbf 1\{u=u'\}\). This shortcut channel can produce essentially arbitrary labels on the finite support of \(\mathcal P\) but provides no meaningful control on unseen inputs from \(\mathcal Q\) when the supports are disjoint. It can be viewed as an analogue of the unconstrained capacity of a heavily overparameterized neural network: for instance, when trained on math data, an LLM may internalize a task-specific computation and produce answers with little step-by-step symbolic reasoning. This can achieve low ID error but need not transfer reliably to unseen problems. We observe related behaviors empirically in Section~\ref{subsec: rl}.\looseness=-1

\textbf{Regularization.}
We regularize the next-token generator \(N\) (Eq.~\ref{eq:next_token_ab}) by its squared functional norm. Concretely, for any induced predictor
\(f_k\!=\!g\!+\!h\) of the form in Eq.~\ref{eq:fk_decompose_ab}, we define\looseness=-1
\[
R(g,h)\;\coloneqq\;\lambda\,\|N\|^2,
\qquad
\|N\|^2\coloneqq \|a\|^2+\|b\|_{\mathcal B}^2,
\]
for some \(\lambda\!>\!0\). \(\|a\|^2\) is the squared Frobenius norm of the induced linear map, i.e., \(\sum_{i=1}^p w_i^2\), and \(\|b\|_{\mathcal B}^2\) is the squared RKHS norm under the kernel
defining \(\mathcal B\). Equivalently, this corresponds to \(R_G(g)\!\coloneqq\! \lambda\|a\|^2\) and
\(R_H(h)\!\coloneqq\! \lambda\|b\|_{\mathcal B}^2\).\looseness=-1

Finally, we verify that the above satisfies Assumption~\ref{as:decomp}:
\begin{proposition}
\label{prop:decomp_instantiation}
Letting \(M\coloneqq 2^{p-1}\), Assumption.~\ref{as:decomp} holds with \(\gamma\!=\!1\), \(C\!=\!1\),
\(a_{\mathcal P}\!=\!a_{\mathcal Q}\!=\!1/(\lambda M)\), \(\rho_k\!=\!\lambda \sum_{i=1}^p \theta_i^{2/k}\),
\(\epsilon_k\!=\!\sum_{i=1}^p ((\theta_i\!-\!\tau^k)_+)^2\), and
\(\delta_{k}\!=\!\frac{\lambda M}{1+\lambda M}\,\|\theta\|_2^2\).
\end{proposition}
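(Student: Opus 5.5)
The plan is to verify each item of Assumption~\ref{as:decomp} by direct computation, using two facts about the hypercube data. First, under both $\mathcal P$ and $\mathcal Q$ the coordinates $u_2,\dots,u_p$ are i.i.d.\ uniform signs, and $u_1$ is a fixed sign with $u_1^2=1$. Second, the Kronecker-delta RKHS norm is $\|b\|_{\mathcal B}^2=\sum_{u\in\mathcal U} b(u)^2$, so $b$ is just an arbitrary vector indexed by the $2^p$ vertices. Note $|\mathcal U_P|=|\mathcal U_Q|=M=2^{p-1}$.

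\textbf{Structured channel ($\gamma$, $\epsilon_k$, $\rho_k$).} Any $g\in\mathcal G_k$ has the form $g(x)=\sum_i w_i^k u_i$, so $g(x)-z(x)=\sum_i (w_i^k-\theta_i)u_i$. Expanding the square, every cross term $u_iu_j$ with $i\ne j$ has mean zero under both $\mathcal P$ and $\mathcal Q$, because at least one of the two indices is $\ge 2$. Hence
\[
L_{\mathcal P}(g)=L_{\mathcal Q}(g)=\sum_i (w_i^k-\theta_i)^2,
\]
which gives $\gamma=1$. Minimizing coordinatewise over $w_i\in[0,\tau]$, with $\theta_i\ge0$, yields $w_i=\min(\theta_i^{1/k},\tau)$. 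This gives $\epsilon_k=\sum_i((\theta_i-\tau^k)_+)^2$. Taking this minimizer as $g_k^\star$, we get $R_G(g_k^\star)=\lambda\sum_i \min(\theta_i^{2/k},\tau^2)\le\lambda\sum_i\theta_i^{2/k}=\rho_k$.

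\textbf{Norm control ($a_{\mathcal P},a_{\mathcal Q}$).} For $h=b\circ\pi$,
\[
\mathbb E_{\mathcal P}h^2=\frac1M\sum_{u\in\mathcal U_P}b(u)^2\le \frac1M\|b\|_{\mathcal B}^2=\frac{R_H(h)}{\lambda M},
\]
and the identical bound holds for $\mathcal Q$ with $\mathcal U_Q$ in place of $\mathcal U_P$.

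\textbf{Shortcut failure ($C=1$).} This is the step needing a structural argument about minimizers rather than a calculation. In the objective \eqref{eq:erm}, the values $b(u)$ for $u\in\mathcal U_Q$ enter only through the penalty $\lambda\sum_u b(u)^2$, since $\mathcal P$ has no mass on $\mathcal U_Q$. Therefore every minimizer has $\hat b\equiv 0$ on $\mathcal U_Q$: setting those entries to zero strictly lowers the objective unless they are already zero. Consequently $\hat h=0$ $\mathcal Q$-a.s., so $\mathbb E_{\mathcal Q}\|\hat h-\hat r\|^2=\mathbb E_{\mathcal Q}\|\hat r\|^2$, and \eqref{eq:shortcut_failure} holds with $C=1$.

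I would also note that minimizers exist. The objective is continuous in $w$ on the compact box $[0,\tau]^p$, and in $b$ it is a strongly convex quadratic on a finite-dimensional space, so the joint minimum is attained.

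\textbf{Computation of $\delta_k$.} The objective $L_{\mathcal P}(h)+R_H(h)$ decouples over $u\in\mathcal U_P$ as
\[
\sum_{u\in\mathcal U_P}\Bigl[\tfrac1M(b(u)-z(u))^2+\lambda b(u)^2\Bigr],
\]
with $b=0$ off $\mathcal U_P$. Each term is minimized at $b(u)=z(u)/(1+\lambda M)$, with minimum value $\lambda z(u)^2/(1+\lambda M)$. Summing over $\mathcal U_P$ gives
\[
\delta_k=\frac{\lambda M}{1+\lambda M}\,\mathbb E_{\mathcal P}z^2=\frac{\lambda M}{1+\lambda M}\,\|\theta\|_2^2,
\]
where the last equality again uses the orthogonality of the coordinates $u_i$ under $\mathcal P$.
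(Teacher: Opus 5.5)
Your proposal is correct and matches the paper's proof essentially step for step. Orthogonality of the hypercube coordinates under both $\mathcal P$ and $\mathcal Q$ gives $\gamma=1$ and $\epsilon_k$; the clipped choice $w_i=\min(\theta_i^{1/k},\tau)$ gives $\rho_k$; $\|b\|_{\mathcal B}^2=\sum_u b(u)^2$ gives the norm-control constants; minimizers vanishing on $\mathcal U_Q$ give $C=1$; and the per-vertex shrinkage $b(u)=z(u)/(1+\lambda M)$ gives $\delta_k$. Your remark that minimizers exist is a small addition that the paper leaves implicit.
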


To illustrate the theorem numerically, let's take \(p\!=\!8\), \(\theta\!=\![0,2,0, 0\dots]^\top\), \(\tau\!=\!1.1\),  and \(\lambda\!=\!10^{-5}\). Then $\rho_k\!=\!\frac{\sqrt[k]{4}}{10^5}$, $\epsilon_k\!=\!((2\!-\!1.1^k)_+)^2$, $\delta_{k}\!\approx\! 0.005$. By Theorem~\ref{thm:main}, the ID loss satisfies $
L_{\mathcal P}(\hat f)\le \delta_k \leq 0.006$ for all k,
so the ID error is small and plateaus right from the beginning. In contrast, the OOD loss is bounded as $((2\!-\!1.1^k)_+)^2\!\le\!
L_{\mathcal Q}(\hat f)\!\le\!
4692\Bigl(((2-1.1^k)_+)^2\!+\!\frac{\sqrt[k]{4}}{10^5}\Bigr)$.
In particular, when \(k\!=\!1\), \(L_{\mathcal Q}(\hat f)\!\geq\! 0.81\).
When \(k\!\geq\!8\), the upper bound gives \(L_{\mathcal Q}(\hat f)\!\le\! 0.056\).
Thus increasing \(k\) from \(1\) to \(8\) provably reduces the OOD loss by at least \(0.754\),
despite almost no ID improvement.\looseness=-1

\section{Experiments}\label{sec:exp}

\begin{figure}[!t]
    \centering
    \includegraphics[width=0.52\linewidth]{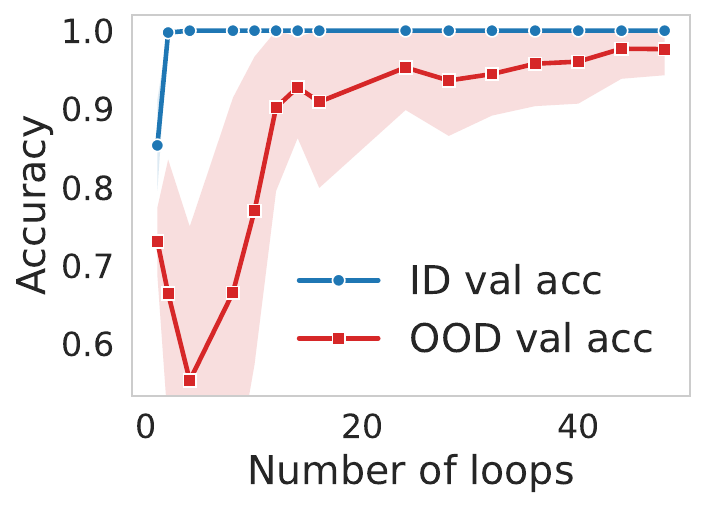}
    \vspace{-.12cm}
    \caption{Looped Transformers trained on $4$-hop problems. Results averaged over 8 random seeds, with shaded regions showing std. Increasing the loop count yields OOD gains up to around 44 loops, while ID accuracy saturates near $100\%$ by around 2 loops.\looseness=-1}
    \label{fig:phop}
    \vspace{-.15cm}
\end{figure}

\subsection{Scaling Latent Reasoning via Looped Transformers on Synthetic Algorithmic Tasks}\label{subsec: phop}

Looped Transformers have recently emerged as a new way to scale reasoning length via additional ``latent thoughts'' \cite{yang2023looped,giannou2023looped,saunshi2025reasoning}. Conceptually, a $k$-loop model can be viewed as the $k$-fold self-composition of a single Transformer block: the same backbone (with shared weights) is applied repeatedly, rather than stacking $k$ distinct layers with different parameters. See the model and training details in Appendix~\ref{apdx:phop_model_training}.
\looseness=-1

\textbf{$p$-hop induction.} We consider the problem studied in \cite{sanford2024transformers,saunshi2025reasoning}. In this task, an input is a sequence over a finite alphabet. Starting from the last position, the model repeatedly ``backtracks'' by jumping to the position after the most recent earlier occurrence of the current symbol; after $p$ such steps, it must output the symbol at the resulting location. See Appendix \ref{apdx:phop_def} for the formal definition. This probes a form of multi-step retrieval that is reminiscent of the iterative reasoning required in certain reading-comprehension-style settings.\looseness=-1

\textbf{ID/OOD split.} Let $i_1,\dots,i_p$ denote the hop indices (i.e., the positions visited during backtracking). We assign an instance to ID if $\sum_{t=1}^p i_t$ is even, and to OOD otherwise.\looseness=-1

Figure~\ref{fig:phop} summarizes the results. As we increase the number of loops, ID validation accuracy quickly saturates to nearly $100\%$ (around 2 loops), while OOD validation accuracy improves much more slowly and continues to increase even at very large loop counts.

\subsection{Scaling Token Budgets in RL on Math Tasks}\label{subsec: rl}

\begin{figure}[!t]
    \centering
    \begin{subfigure}[t]{0.54\linewidth}
        \centering
        \includegraphics[width=.95\linewidth]{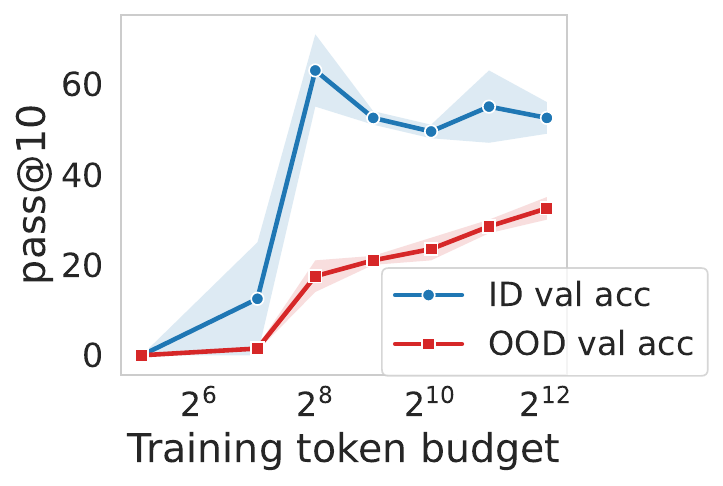}
        \vspace{-.1cm}
        \caption{Eval budget: same as train}
        \label{fig:rl_poly_left}
    \end{subfigure}\hfill
    \begin{subfigure}[t]{0.46\linewidth}
        \centering
        \includegraphics[width=.95\linewidth]{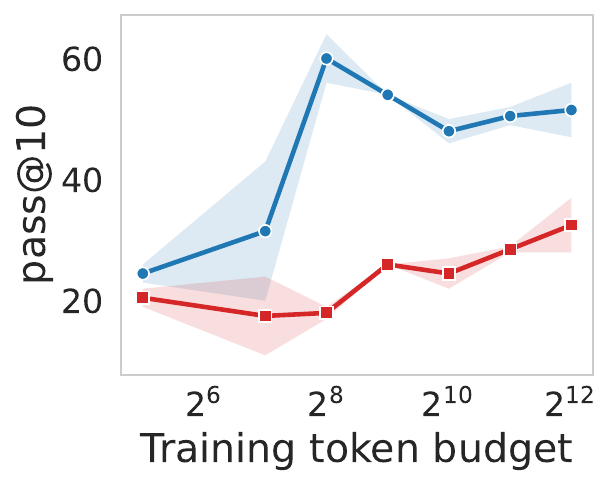}
        \vspace{-.1cm}
        \caption{Eval budget: 4096}
        \label{fig:rl_poly_right}
    \end{subfigure}
    \caption{RL fine-tuning of \texttt{Qwen2.5-1.5B-Instruct} on math data with varying training-time token budgets. Results are averaged over 3 random seeds. ID accuracy peaks at 256 and slightly drops thereafter, while OOD accuracy steadily improves throughout. This trend persists even when fixing the evaluation-time budget to 4096 (b), indicating that the effect is truly driven by the training-time budget. \looseness=-1}
    \vspace{-0.15cm}
    \label{fig:rl_poly}
\end{figure}

\begin{figure}[!t]
    \centering
    \includegraphics[width=1.0\linewidth]{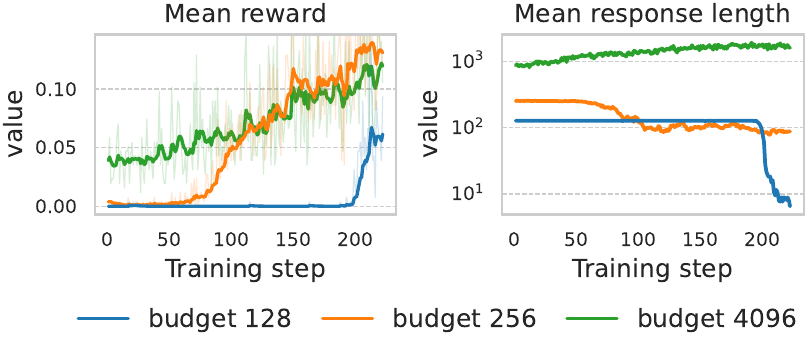}
    \vspace{-.1cm}
    \caption{For small budgets (128/256), the mean reward (left) remains near 0 early in training and rises sharply when the mean response length (right) abruptly drops, indicating a transition to shorter, less step-by-step outputs. In contrast, with a large budget (4096), response length increases throughout training.\looseness=-1}
    \vspace{-.1cm}
    \label{fig:diagnosis}
\end{figure}

We fine-tune reasoning-capable LLMs on math data with RL, varying the training-time token budget, which caps the maximum allowed CoT reasoning length.

\textbf{Dataset.} The ID dataset is an adaptation of the \texttt{Polynomial Roots} dataset from \cite{sun2025omega}, which involves solving polynomial equations. We use our own variant because the original questions are too difficult for the 1.5B model we consider; see Appendix \ref{apdx:rl} for details. The OOD dataset is \texttt{Mixed Operation} also from \cite{sun2025omega}, which asks the model to compute the value of an expression with mixed arithmetic operations. We train on the training split of the ID data and evaluate on the validation splits of both the ID and OOD data.\looseness=-1

\textbf{Model.} We use GRPO \cite{guo2025deepseek} to finetune \texttt{Qwen2.5-1.5B-Instruct} \cite{qwen2.5} with correctness-based rewards and the LoRA optimizer \cite{hu2022lora}. Additional details are in Appendix \ref{apdx:rl}.\looseness=-1

We evaluate the resulting models under two evaluation setups to separate the effect of the training-time token budget from the (potentially confounding) effect of the evaluation-time token budget. In the first (Figure \ref{fig:rl_poly_left}), we set the evaluation-time token budget to match the budget used during training. In the second (Figure \ref{fig:rl_poly_right}), we fix the evaluation-time token budget to 4096 regardless of the training budget. This second setup serves as a control: even when all models are allowed to generate long outputs at test time, we observe the same qualitative trends, indicating that the behavior is primarily driven by the training-time budget rather than a limitation on test-time generation length. Figure~\ref{fig:rl_poly} reports Pass@10 results: under both evaluation setups, ID validation accuracy shows no clear improvement beyond a token budget of 256 and even slightly drops thereafter, while OOD validation accuracy steadily improves throughout, consistent with our phenomenon. Pass@1 results are deferred to Appendix \ref{apdx:rl_additional} and show the same qualitative pattern, except that ID performance oscillates slightly more, while still exhibiting no clear improvement beyond a token budget of 256.\looseness=-1

We also observe an interesting diagnostic pattern in Fig.~\ref{fig:diagnosis}: for small budgets (128 and 256), the mean reward stays near 0 for tens to hundreds of epochs during training, and then begins to increase steadily. This transition coincides almost exactly with a sudden decrease in mean response length, after which the model eventually learns to output answers with little intermediate reasoning. In contrast, for a large budget (e.g., 4096), the mean response length increases throughout training. Combined with the fact that a budget of 256 performs comparably—or even slightly better—than 4096 on ID data but falls short OOD, these observations suggest that the 256-budget model attains similar ID performance via a less generalizable, less step-by-step mechanism. See example model outputs in Appendix~\ref{apdx:rl_additional}.\looseness=-1
\section{Conclusion and Discussion}
We study how scaling training-time (explicit or latent) reasoning length affects OOD generalization under outcome supervision. Viewing reasoning as self-iteration, we show theoretically that increasing it can improve OOD performance even after ID saturation, by reshaping the induced hypothesis class and interacting with regularization to suppress shortcuts. Empirically, we observe this effect in both looped Transformers (increasing loop counts) and RL fine-tuning (increasing token budgets). Our results offer a learning-theoretic understanding of the role of reasoning length and practical guidance for scaling it under distribution shift.\looseness=-1

We note that our theory and experiments do not rule out “seemingly natural” shortcuts that persist during self-iteration yet are not strongly discouraged by the regularizer. In such cases, increasing training-time reasoning length may not yield OOD improvements. The goal of this work is not to claim universality, but to provide evidence that the phenomenon can occur under certain natural conditions.


\section*{Acknowledgement}
This research was supported in part by in part by the NSF CAREER Award 2146492, NSF-Simons AI Institute for Cosmic Origins (CosmicAI), and NSF AI Institute for Foundations of Machine Learning (IFML).

\bibliography{reference}
\bibliographystyle{icml2026}

\newpage
\appendix
\onecolumn

\section{Proof of Theorem~\ref{thm:high_density_unique_ood_clean}}
\label{apdx: extension_cycle}

\subsection{Overview}

We study the ``cycle task'' on a finite set \(\sS\) of size \(n\), where the ground-truth target
\(z:\sS\to\sS\) is a cyclic permutation (a single directed \(n\)-cycle). The training distribution is uniform
on all but one held-out point \(q\in\sS\). We consider the hypothesis class consisting of \(k\)-fold self-iterates
\(f^{\circ k}\) of arbitrary functions \(f:\sS\to\sS\).

The main goal is a \emph{high-density} uniqueness statement: for any desired accuracy level \(\varepsilon>0\),
we will choose a suitable (large) prime \(n\) so that for at least a \((1-\varepsilon)\) fraction of all integers
\(k\geq n-1\), the empirical risk minimizer over \(\gF_k=\gF^{\circ k}\) is \emph{unique} and equals the true cycle map \(z\).

\paragraph{A WLOG reduction to the canonical cycle.}
Because \(z\) is a single \(n\)-cycle, there exists a relabeling bijection \(\pi:\sS\to\sS\) such that
\(\pi\circ z\circ \pi^{-1}=z_0\), where \(z_0(i)=i+1\pmod n\).
Conjugation preserves \(k\)-fold iterates: \((\pi f\pi^{-1})^{\circ k}=\pi f^{\circ k}\pi^{-1}\), and it preserves
which hypotheses interpolate \(\sP\) (up to relabeling the held-out point).
Thus it suffices to prove the theorem for the canonical target \(z(i)=i+1\pmod n\).
We do so below, and the general statement follows by relabeling.

Henceforth, assume
\[
z(i)\coloneqq i+1 \pmod n.
\]

\paragraph{Natural density.}
For a set \(A\subseteq\mathbb N\), define its (lower) natural density by
\[
\dens(A)\coloneqq \liminf_{N\to\infty}\frac{|A\cap\{1,2,\dots,N\}|}{N}.
\]
(All sets we use below are periodic modulo some integer, hence in fact have a true limit density.)
When we say ``a \((1-\varepsilon)\) fraction of integers \(k\geq m\)'', we mean that
\[
\liminf_{N\to\infty}\frac{|\{k\in\{m,\dots,N\}:k\in A\}|}{N-m+1}\ge 1-\varepsilon,
\]
equivalently \(\dens(A)\ge 1-\varepsilon\) since removing finitely many integers does not change density.

\subsection{A structural uniqueness criterion for fixed \(n\) and \(k\)}

\subsubsection{Functional-graph facts for large iterates}

\paragraph{Functional-graph terminology.}
Any map \(g:\sS\to\sS\) can be viewed as a directed graph with outdegree \(1\) at each vertex.
A point \(x\) is \emph{periodic for \(g\)} if \(g^{\circ t}(x)=x\) for some \(t\ge 1\).
The \emph{height} \(\mathrm{ht}(g)\) is the maximum, over all \(x\in\sS\), of the length of the shortest directed path from \(x\) to the set of periodic points of \(g\).
Equivalently,
\[
\mathrm{ht}(g)\le 1
\quad\Longleftrightarrow\quad
\text{for every }x\in\sS,\ \ g(x)\text{ is periodic for }g.
\]

\begin{lemma}[Large iterates collapse height]
\label{lem:height-collapse}
Let \(|\sS|=n\) and let \(k\ge n-1\). For every function \(f:\sS\to\sS\), the iterate
\(g\coloneqq f^{\circ k}\) satisfies \(\mathrm{ht}(g)\le 1\).
\end{lemma}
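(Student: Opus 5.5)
The plan is to work with the functional graph of \(f\) itself and use the standard fact that every orbit of \(f\) enters a periodic cycle within at most \(n-1\) steps. Write \(\mathrm{Per}(f)\) for the set of periodic points of \(f\), and \(t(x)\) for the least \(t\ge 0\) such that \(f^{\circ t}(x)\in\mathrm{Per}(f)\).

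The first step is to show that \(t(x)\le n-1\) for every \(x\). The orbit \(x, f(x), \dots, f^{\circ n}(x)\) has \(n+1\) terms in an \(n\)-element set. By pigeonhole, some value repeats: \(f^{\circ i}(x)=f^{\circ j}(x)\) with \(0\le i<j\le n\). Then \(f^{\circ i}(x)\) is periodic. We can sharpen this: the tail \(x,\dots,f^{\circ (t(x)-1)}(x)\) consists of distinct points, and the cycle that is reached contains at least one further point. Hence \(t(x)+1\le n\), so \(t(x)\le n-1\).

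Next, set \(g=f^{\circ k}\) with \(k\ge n-1\). For any \(x\), since \(k\ge t(x)\), the point \(g(x)=f^{\circ k}(x)\) lies in \(\mathrm{Per}(f)\): it is obtained by first reaching the periodic set and then continuing, and the periodic set is forward-invariant under \(f\).

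It remains to check that \(\mathrm{Per}(f)\subseteq \mathrm{Per}(g)\). If \(y\) lies on an \(f\)-cycle of length \(m\), then
\[
g^{\circ m}(y)=f^{\circ km}(y)=\bigl(f^{\circ m}\bigr)^{\circ k}(y)=y,
\]
so \(y\) is periodic for \(g\).

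Combining these, \(g(x)\) is periodic for \(g\) for every \(x\). By the stated equivalence, this is exactly \(\mathrm{ht}(g)\le 1\).

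The only mildly delicate point is the bound \(t(x)\le n-1\) rather than \(\le n\). This is handled by the observation above that a nonempty cycle must be disjoint from the pre-periodic tail. Everything else is routine.
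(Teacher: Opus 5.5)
Your proof is correct and follows essentially the same route as the paper. Both arguments use pigeonhole on the orbit $x,\dots,f^{\circ n}(x)$ to reach a periodic point of $f$ within $n-1$ steps, then forward-invariance of the $f$-cycle, then the fact that $f$-periodic points are $g$-periodic. The only difference is that the paper reads off $i\le n-1$ directly from $0\le i<j\le n$, so your separate ``sharpening'' step via the distinct pre-periodic tail is valid but unnecessary.
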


\begin{proof}
Fix \(x\in\sS\) and consider the forward orbit under \(f\):
\[
x,\ f(x),\ f^{\circ 2}(x),\ \dots.
\]
Among the \(n+1\) points \(x,f(x),\dots,f^{\circ n}(x)\) there must be a repetition, so there exist indices
\(0\le i<j\le n\) such that \(f^{\circ i}(x)=f^{\circ j}(x)\). Let \(y\coloneqq f^{\circ i}(x)\).
Then \(f^{\circ (j-i)}(y)=y\), so \(y\) is periodic for \(f\), and moreover \(i\le n-1\).

Since \(k\ge n-1\ge i\), we may write
\[
g(x)=f^{\circ k}(x)=f^{\circ (k-i)}(y).
\]
Because \(y\) lies on a directed cycle of \(f\), every forward iterate of \(y\) also lies on that cycle; in particular \(g(x)\) is periodic for \(f\).

Finally, if a point \(u\) is periodic for \(f\), say \(f^{\circ m}(u)=u\), then it is also periodic for
\(g=f^{\circ k}\) because
\[
g^{\circ m}(u)=(f^{\circ k})^{\circ m}(u)=f^{\circ (km)}(u)=u.
\]
Applying this to \(u=g(x)\) shows that \(g(x)\) is periodic for \(g\).
Since \(x\) was arbitrary, every point reaches a periodic point of \(g\) in at most one step, i.e.\ \(\mathrm{ht}(g)\le 1\).
\end{proof}

\subsubsection{The only height-\(\le 1\) interpolants}

Define the ``skip-one'' map \(\tilde z:\sS\to\sS\) by
\begin{equation}
\label{eq:tilde-z}
\tilde z(x)\coloneqq
\begin{cases}
z(x), & x\neq q,\\
z^{\circ 2}(q), & x=q.
\end{cases}
\end{equation}
Thus \(\tilde z\) agrees with \(z\) on the training set \(\sP\) but sends the held-out point \(q\) forward by two steps.

\begin{lemma}[Only two interpolants can have height at most \(1\)]
\label{lem:only-two-interpolants}
Let \(g:\sS\to\sS\) satisfy \(g(x)=z(x)\) for all \(x\in\sP\).
If \(\mathrm{ht}(g)\le 1\), then \(g\) must equal either \(z\) or \(\tilde z\).
\end{lemma}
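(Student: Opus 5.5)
Since \(g\) agrees with \(z\) on \(\sP=\sS\setminus\{q\}\), the only freedom is the value \(c\coloneqq g(q)\in\sS\). The plan is a case analysis on \(c\): determine the periodic points of \(g\) for each choice, then check the height condition \(\mathrm{ht}(g)\le 1\) in its equivalent form, namely that \(g(x)\) is periodic for \(g\) for every \(x\in\sS\).

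First, record the structure forced by interpolation. Along \(z\), the points \(z(q),z^{\circ 2}(q),\dots,z^{\circ (n-1)}(q)=z^{-1}(q)\) form a directed path that \(g\) follows exactly as \(z\) does, ending at \(z^{-1}(q)\mapsto q\). So \(g\) is the path
\[
z(q)\to z^{\circ 2}(q)\to\cdots\to z^{-1}(q)\to q,
\]
together with one extra edge \(q\mapsto c\). Write \(c=z^{\circ j}(q)\) with \(j\in\{0,1,\dots,n-1\}\).

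Next, identify the unique cycle of \(g\). Every vertex has out-degree \(1\) and the graph is the path plus one edge, so the cycle is
\[
q\to z^{\circ j}(q)\to\cdots\to z^{-1}(q)\to q.
\]
For \(j=0\) this is the fixed point \(q\). Its length is \(n-j+1\) when \(j\ge 1\), and \(1\) when \(j=0\). The periodic points are exactly the vertices on this cycle. The vertices not on it are \(z^{\circ i}(q)\) for \(1\le i<j\) when \(j\ge 2\); when \(j=0\) they are all \(z^{\circ i}(q)\) with \(1\le i\le n-1\).

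Now impose \(\mathrm{ht}(g)\le 1\), i.e., every non-periodic point maps in one step into the cycle. A non-periodic point \(z^{\circ i}(q)\) maps to \(z^{\circ (i+1)}(q)\), which must then be periodic.
\begin{itemize}
\item If \(j=0\), the point \(z(q)\) maps to \(z^{\circ 2}(q)\), which is not periodic when \(n\ge 3\). So the height is at least \(2\), and this case is excluded.
\item If \(j\ge 3\), the point \(z(q)\) is non-periodic and maps to \(z^{\circ 2}(q)\), which is also non-periodic since \(2<j\). Again the height is at least \(2\), so the case is excluded.
\item If \(j=1\), then \(g=z\) is a permutation and has height \(0\).
\item If \(j=2\), the only non-periodic point is \(z(q)\), and it maps onto the cycle. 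This is exactly \(g=\tilde z\).
\end{itemize}
Hence \(g\in\{z,\tilde z\}\).

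There is no real obstacle here. The one point needing care is justifying that the path-plus-one-edge description pins down the periodic set exactly. The \(j=0\) and \(j\ge 3\) cases also require \(n\ge 3\), to guarantee that \(z^{\circ 2}(q)\neq q\) and that the intermediate points are distinct.
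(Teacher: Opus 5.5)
Your proof is correct and is essentially the paper's argument: both split on where \(g\) sends \(q\), identify the unique cycle of \(g\), and rule out every case except \(g(q)\in\{z(q),z^{\circ 2}(q)\}\) by noting that \(z(q)\) has the non-periodic image \(z^{\circ 2}(q)\). The only cosmetic difference is that you treat the fixed-point case \(g(q)=q\) separately, whereas the paper folds it into the \(t\ge 3\) case as \(t=n\).
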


\begin{proof}
Because \(z\) is an \(n\)-cycle, there is a unique integer \(t\in\{1,2,\dots,n\}\) such that
\[
g(q)=z^{\circ t}(q),
\]
where \(t=n\) corresponds to \(g(q)=q\).

If \(t=1\), then \(g(q)=z(q)\) and hence \(g=z\).

If \(t=2\), then \(g\) agrees with \(z\) on \(\sP\) and sends \(q\) to \(z^{\circ 2}(q)\), so \(g=\tilde z\).

It remains to show that \(t\ge 3\) is impossible under \(\mathrm{ht}(g)\le 1\).

Let \(r\coloneqq z(q)\in\sP\). Then \(g(r)=z(r)=z^{\circ 2}(q)\).
We claim that \(g(r)\) is \emph{not} periodic for \(g\) when \(t\ge 3\).

Indeed, for each \(j\in\{0,1,\dots,t-3\}\), the point \(z^{\circ (j+2)}(q)\) is not equal to \(q\), so along this segment \(g\) acts exactly as \(z\):
\[
g^{\circ j}(g(r))=g^{\circ j}(z^{\circ 2}(q))=z^{\circ (j+2)}(q).
\]
In particular, taking \(j=t-3\) gives \(g^{\circ (t-3)}(g(r))=z^{\circ (t-1)}(q)\).
Since \(t\in\{3,\dots,n\}\) implies \(z^{\circ (t-1)}(q)\neq q\), one more application of \(g\) still follows \(z\), hence
\[
g^{\circ (t-2)}(g(r)) = g\!\bigl(z^{\circ (t-1)}(q)\bigr)=z^{\circ t}(q)=g(q).
\]

The point \(g(q)\) lies on the unique directed cycle of \(g\):
\begin{itemize}
\item If \(t<n\), then \(q\to z^{\circ t}(q)\to z^{\circ (t+1)}(q)\to\cdots\to z^{\circ (n-1)}(q)\to q\) is a directed cycle.
\item If \(t=n\), then \(g(q)=q\) and \(q\) is a fixed point.
\end{itemize}
When \(t\ge 3\), the point \(g(r)=z^{\circ 2}(q)\) is not on that cycle; it flows into the cycle after at least one step and therefore cannot be periodic.

Thus there exists \(x=r\) such that \(g(x)=g(r)\) is \emph{not} periodic, which contradicts \(\mathrm{ht}(g)\le 1\).
Therefore \(t\not\ge 3\), and the only possibilities are \(t\in\{1,2\}\), giving \(g\in\{z,\tilde z\}\).
\end{proof}

\subsubsection{Which interpolants are \(k\)-fold iterates?}

\begin{lemma}[\(z\) is a \(k\)-fold iterate iff \(\gcd(k,n)=1\)]
\label{lem:z-is-kth-iterate}
Let \(|\sS|=n\) and \(k\ge 1\). Then
\[
z\in \gF_k \quad\Longleftrightarrow\quad \gcd(k,n)=1.
\]
\end{lemma}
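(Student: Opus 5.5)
We prove the two directions separately, working with the canonical cycle \(z(i)=i+1 \pmod n\) as in the WLOG reduction, so that \(z^{\circ m}(i)=i+m \pmod n\).

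\emph{(\(\Leftarrow\)) If \(\gcd(k,n)=1\), then \(z\in\gF_k\).} Since \(\gcd(k,n)=1\), \(k\) is invertible modulo \(n\). Choose an integer \(m\) with \(km\equiv 1 \pmod n\) and set \(f\coloneqq z^{\circ m}\), i.e.\ \(f(i)=i+m \pmod n\). Then \(f^{\circ k}(i)=i+km\equiv i+1 \pmod n\), so \(f^{\circ k}=z\) and \(z\in\gF_k\). This direction is purely arithmetic.

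\emph{(\(\Rightarrow\)) If \(z=f^{\circ k}\) for some \(f:\sS\to\sS\), then \(\gcd(k,n)=1\).}

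First we show that \(f\) is a bijection. Since \(z\) is bijective, \(f^{\circ k}\) is surjective, which forces \(f\) to be surjective. On a finite set, surjective implies bijective, so \(f\) is a permutation of \(\sS\).

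Next, \(f\) commutes with \(f^{\circ k}=z\). Now decompose \(f\) into disjoint cycles with lengths \(\ell_1,\dots,\ell_r\). Raising \(f\) to the \(k\)-th power splits each cycle of length \(\ell_j\) into \(\gcd(k,\ell_j)\) cycles, each of length \(\ell_j/\gcd(k,\ell_j)\).

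Since \(f^{\circ k}=z\) is a single \(n\)-cycle, two consequences follow:
\begin{itemize}
\item \(f\) has exactly one cycle, so \(r=1\) and \(\ell_1=n\).
\item That cycle does not split under the \(k\)-th power, so \(\gcd(k,n)=1\).
\end{itemize}

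To avoid quoting the splitting fact as a black box, we can prove it directly. On an \(\ell\)-cycle, which is isomorphic to \(\mathbb{Z}/\ell\) with \(f\) acting as \(+1\), the map \(f^{\circ k}\) acts as \(+k\). The orbits of \(+k\) on \(\mathbb{Z}/\ell\) are exactly the cosets of the subgroup generated by \(k\). That subgroup has order \(\ell/\gcd(k,\ell)\), so there are \(\gcd(k,\ell)\) orbits, each of that size.

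Alternatively, once we know \(f\) is a single \(n\)-cycle, we can conjugate it to \(+1\). Then \(f^{\circ k}\) is translation by \(k\), and translation by \(k\) is an \(n\)-cycle if and only if \(k\) generates \(\mathbb{Z}/n\), i.e.\ if and only if \(\gcd(k,n)=1\).

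The only step that needs some care is the cycle-type computation for powers of a permutation, together with ruling out non-bijective \(f\). Both are handled by the surjectivity argument and the coset count above, so we do not expect a genuine obstacle in either direction.
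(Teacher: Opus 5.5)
Your proof is correct, but the (\(\Rightarrow\)) direction takes a different route from the paper. Both arguments first show that \(f\) is a permutation. The paper uses injectivity and you use surjectivity, which makes no real difference.

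The paper then does use the commutation \(f\circ z=z\circ f\). Evaluating along the orbit of \(0\) gives \(f(i)=f(z^{\circ i}(0))=z^{\circ i}(f(0))=z^{\circ a}(i)\), so \(f=z^{\circ a}\). The condition \(f^{\circ k}=z\) then becomes the congruence \(ak\equiv 1 \pmod n\), which is solvable iff \(\gcd(k,n)=1\).

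You mention the commutation but never use it. Instead you argue from cycle type: the \(k\)-th power of an \(\ell\)-cycle splits into \(\gcd(k,\ell)\) cycles, so a single-cycle \(f^{\circ k}\) forces \(f\) to be one \(n\)-cycle with \(\gcd(k,n)=1\). This step needs fixed points counted as \(1\)-cycles, which your \(\sum_j \ell_j=n\) setup implicitly does.

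The trade-off is as follows.
\begin{itemize}
\item The paper's centralizer argument avoids cycle decompositions entirely. It also yields more: every \(k\)-th root of \(z\) among all maps \(\sS\to\sS\) is a rotation, so the root is unique when it exists.
\item Your argument is more structural and extends to deciding when an arbitrary permutation has a \(k\)-th root. It is exactly the reasoning the paper uses in the next lemma for \(\tilde z\) restricted to its cycle \(C\), so your approach would let both lemmas share one proof.
\end{itemize}
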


\begin{proof}
(\(\Rightarrow\)) Suppose \(z=f^{\circ k}\) for some \(f:\sS\to\sS\).
Since \(z\) is bijective, \(f^{\circ k}\) is bijective, hence \(f\) is injective: if \(f(x)=f(y)\) then applying \(f^{\circ (k-1)}\) gives
\(f^{\circ k}(x)=f^{\circ k}(y)\), and bijectivity of \(f^{\circ k}\) implies \(x=y\).
On a finite set, injective implies bijective, so \(f\) is a permutation of \(\sS\).

Moreover,
\[
f\circ z = f\circ f^{\circ k} = f^{\circ (k+1)} = f^{\circ k}\circ f = z\circ f,
\]
so \(f\) commutes with \(z\).
Let \(a\in\{0,1,\dots,n-1\}\) be such that \(f(0)=a=z^{\circ a}(0)\).
Then for every \(i\in\sS\),
\[
f(i)=f(z^{\circ i}(0))=z^{\circ i}(f(0))=z^{\circ i}(a)=z^{\circ a}(i),
\]
so \(f=z^{\circ a}\).
Now \(f^{\circ k}=z^{\circ (ak)}=z\) implies \(ak\equiv 1\pmod n\), which is solvable if and only if \(\gcd(k,n)=1\).

(\(\Leftarrow\)) If \(\gcd(k,n)=1\), let \(a\) be the inverse of \(k\) modulo \(n\), i.e.\ \(ak\equiv 1\pmod n\).
Set \(f\coloneqq z^{\circ a}\). Then
\[
f^{\circ k}=z^{\circ (ak)}=z^{\circ 1}=z,
\]
so \(z\in\gF_k\).
\end{proof}

\begin{lemma}[\(\tilde z\) is a \(k\)-fold iterate iff \(\gcd(k,n-1)=1\)]
\label{lem:tilde-is-kth-iterate}
Let \(\tilde z\) be defined by~\eqref{eq:tilde-z} and set \(L\coloneqq n-1\).
Then for any \(k\ge 1\),
\[
\tilde z\in \gF_k
\quad\Longleftrightarrow\quad
\gcd(k,L)=1.
\]
\end{lemma}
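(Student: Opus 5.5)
The plan is to first read off the functional-graph structure of \(\tilde z\), and then argue each direction separately, reusing the commuting-permutation argument from Lemma~\ref{lem:z-is-kth-iterate}.

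\textbf{Structure of \(\tilde z\).} Set \(r\coloneqq z(q)\) and \(C\coloneqq \sS\setminus\{r\}\). I will check that \(\tilde z\) maps \(C\) to itself along the cycle
\[
q\to z^{\circ 2}(q)\to z^{\circ 3}(q)\to\cdots\to z^{\circ (n-1)}(q)\to q .
\]
The last arrow holds because \(z^{\circ(n-1)}(q)\neq q\), so \(\tilde z\) acts there as \(z\), and \(z^{\circ n}(q)=q\). Hence \(\sigma\coloneqq \tilde z|_C\) is a single cycle of length \(L=n-1\ge 2\). The remaining point \(r\) is a tail vertex with \(\tilde z(r)=z^{\circ 2}(q)=\tilde z(q)\). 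In particular, \(C\) is exactly the set of periodic points of \(\tilde z\), and \(r\notin\tilde z(\sS)\).

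\textbf{(\(\Leftarrow\)).} If \(\gcd(k,L)=1\), pick \(a\) with \(ak\equiv 1\pmod L\). Define \(f\) by \(f|_C\coloneqq\sigma^{\circ a}\) and \(f(r)\coloneqq f(q)=\sigma^{\circ a}(q)\). On \(C\) this gives \(f^{\circ k}=\sigma^{\circ ak}=\sigma=\tilde z|_C\). For the tail point, \(f^{\circ k}(r)=f^{\circ(k-1)}(f(q))=f^{\circ k}(q)=\sigma(q)=\tilde z(q)=\tilde z(r)\). So \(\tilde z=f^{\circ k}\in\gF_k\).

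\textbf{(\(\Rightarrow\)).} Suppose \(\tilde z=f^{\circ k}\). Then \(f\) commutes with \(\tilde z\), exactly as in Lemma~\ref{lem:z-is-kth-iterate}. Next I will show that \(f\) and \(f^{\circ k}\) have the same periodic points.
\begin{itemize}
\item If \(f^{\circ m}(x)=x\), then \((f^{\circ k})^{\circ m}(x)=x\).
\item If \(f^{\circ km}(x)=x\), then \(x\) is \(f\)-periodic by definition.
\end{itemize}
So \(C\) is precisely the periodic set of \(f\). This set is mapped bijectively onto itself by \(f\), since \(f\) permutes each of its cycles. Therefore \(\phi\coloneqq f|_C\) is a permutation of \(C\) commuting with the \(L\)-cycle \(\sigma\). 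The argument of Lemma~\ref{lem:z-is-kth-iterate} then gives \(\phi=\sigma^{\circ a}\) for some \(a\): fix a base point \(c_0\in C\), write \(\phi(c_0)=\sigma^{\circ a}(c_0)\), and propagate using commutation. Now \(\phi^{\circ k}=\sigma\) forces \(ak\equiv 1\pmod L\), hence \(\gcd(k,L)=1\).

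\textbf{Main obstacle.} I expect the delicate step to be justifying that \(f\) restricts to a permutation of \(C\), given that \(f\) itself need not be injective (the point \(r\) can collide with others). The periodic-point equivalence above handles this cleanly. The remaining checks are the cycle description of \(\tilde z\), which uses \(n\ge 3\), and the computation \(f^{\circ k}(r)=\tilde z(r)\) in the construction.
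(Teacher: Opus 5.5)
Your proof is correct. The (\(\Leftarrow\)) direction is in fact the paper's construction. The paper enumerates \(C\) as \(e_i=\tilde z^{\circ i}(q)\) and sets \(f(e_i)=e_{i+a}\), i.e.\ \(f|_C=\sigma^{\circ a}\). It then solves \(j+a(k-1)\equiv 1\pmod L\) for \(f(r)=e_j\), which gives \(j\equiv a\), so \(f(r)=e_a=f(q)\). That is exactly your choice, and you verify it more directly via \(f^{\circ k}(r)=f^{\circ k}(q)\).

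For (\(\Rightarrow\)), you share the paper's first step: \(f\) and \(f^{\circ k}\) have the same periodic points, so \(f|_C\) is a permutation of \(C\). After that the arguments diverge.
\begin{itemize}
\item The paper argues from cycle types. Taking powers cannot merge cycles, so \(f|_C\) must itself be a single \(L\)-cycle. The \(k\)-th power of an \(L\)-cycle splits into \(\gcd(k,L)\) cycles.
\item You use the commutation \(f\circ\tilde z=\tilde z\circ f\), restricted to the invariant set \(C\), to force \(f|_C=\sigma^{\circ a}\), and then read off \(ak\equiv 1\pmod L\).
\end{itemize}

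Your route reuses the argument of Lemma~\ref{lem:z-is-kth-iterate} essentially verbatim. It avoids the two standard permutation facts the paper invokes without proof. It also shows that \(f|_C\) is uniquely determined as \(\sigma^{\circ a}\) with \(a\equiv k^{-1}\pmod L\). The paper's route needs no commutation at all, and it makes the role of \(\gcd(k,L)\) more transparent, since that is the number of cycles the \(k\)-th power produces.
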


\begin{proof}
Let \(r\coloneqq z(q)\) and let \(C\coloneqq \sS\setminus\{r\}\), so \(|C|=L=n-1\).
By construction, \(\tilde z\) restricts to a single directed cycle on \(C\) of length \(L\) (it is the \(z\)-cycle with the point \(r\) removed), and \(r\) maps into that cycle:
\[
\tilde z(r)=z^{\circ 2}(q)\in C.
\]
In particular, the periodic points of \(\tilde z\) are exactly the points of \(C\).

\smallskip
\noindent\emph{Fact: periodic points are preserved under iterates.}
For any map \(f\) and any \(k\ge 1\), a point \(x\) is periodic for \(f\) if and only if it is periodic for \(f^{\circ k}\):
\[
f^{\circ m}(x)=x \ \Rightarrow\ (f^{\circ k})^{\circ m}(x)=f^{\circ (km)}(x)=x,
\]
and conversely
\[
(f^{\circ k})^{\circ m}(x)=x\ \Rightarrow\ f^{\circ (km)}(x)=x.
\]
Thus \(f\) and \(f^{\circ k}\) have exactly the same periodic points.

\smallskip
\noindent\emph{(\(\Rightarrow\)) Necessity.}
Assume \(\tilde z=f^{\circ k}\) for some \(f\).
By the fact above, \(f\) and \(\tilde z\) have the same periodic points, so the periodic points of \(f\) are exactly \(C\).
In particular \(f(C)\subseteq C\) and every point of \(C\) is periodic for \(f\), so \(f|_C\) is a permutation of \(C\).

Since \(\tilde z|_C=(f|_C)^{\circ k}\) is a \emph{single} cycle of length \(L\), the permutation \(f|_C\) must itself be a single cycle of length \(L\)
(taking powers of a permutation cannot merge distinct cycles).
On a cycle of length \(L\), taking the \(k\)th power splits the cycle into \(\gcd(L,k)\) disjoint cycles.
Because \((f|_C)^{\circ k}=\tilde z|_C\) is a single cycle, we must have \(\gcd(L,k)=1\).

\smallskip
\noindent\emph{(\(\Leftarrow\)) Sufficiency.}
Assume \(\gcd(k,L)=1\).
Enumerate the cycle \(C\) in the order induced by \(\tilde z\) as
\[
e_0\coloneqq q,\quad e_1\coloneqq z^{\circ 2}(q),\quad e_2\coloneqq z^{\circ 3}(q),\ \dots,\quad e_{L-1}\coloneqq z^{\circ (n-1)}(q),
\]
so that \(\tilde z(e_i)=e_{i+1\ \mathrm{mod}\ L}\) for all \(i\).

Let \(a\) be the inverse of \(k\) modulo \(L\), i.e.\ \(ak\equiv 1\pmod L\), and define \(f\) on \(C\) by
\[
f(e_i)\coloneqq e_{i+a\ \mathrm{mod}\ L}.
\]
Then for each \(e_i\in C\),
\[
f^{\circ k}(e_i)=e_{i+ak\ \mathrm{mod}\ L}=e_{i+1\ \mathrm{mod}\ L}=\tilde z(e_i).
\]

It remains to define \(f\) on the remaining point \(r\notin C\).
We want \(f^{\circ k}(r)=\tilde z(r)=e_1\).
Since \(f(r)\) will lie in \(C\) and \(f\) acts on \(C\) by \(e_i\mapsto e_{i+a}\), it suffices to choose \(f(r)=e_j\) so that
\[
f^{\circ (k-1)}(e_j)=e_1.
\]
But \(f^{\circ (k-1)}(e_j)=e_{j+a(k-1)\ \mathrm{mod}\ L}\), so we need
\[
j+a(k-1)\equiv 1 \pmod L.
\]
There is a unique \(j\in\{0,1,\dots,L-1\}\) satisfying this congruence; fix that \(j\) and set \(f(r)\coloneqq e_j\).
Then
\[
f^{\circ k}(r)=f^{\circ (k-1)}(f(r))=f^{\circ (k-1)}(e_j)=e_{j+a(k-1)}=e_1=\tilde z(r).
\]
Thus \(f^{\circ k}=\tilde z\) on all of \(\sS\), proving \(\tilde z\in\gF_k\).
\end{proof}

\subsubsection{Uniqueness for fixed \(n,k\)}

\begin{theorem}[Uniqueness of the good interpolant for large iterates]
\label{thm:unique-good}
Fix \(n\ge 3\) and a held-out point \(q\in\sS\).
Let \(k\ge n-1\) satisfy
\[
\gcd(k,n)=1
\quad\text{and}\quad
\gcd(k,n-1)>1.
\]
Then
\[
\arg\min_{g\in \gF_k} L_{\gP}(g)=\{z\},
\]
and consequently the unique minimizer \(\hat g=z\) has perfect held-out performance:
\[
L_{\gQ}(\hat g)=0.
\]
\end{theorem}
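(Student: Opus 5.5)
The plan is to chain the preceding lemmas. Since the target $z$ is a permutation that coincides with itself on $\sP$, the minimum possible value of $L_{\gP}$ over any class is $0$. This value is attained inside $\gF_k$ exactly when some $g\in\gF_k$ interpolates $z$ on $\sP$. The first step is therefore to show that $\gF_k$ contains at least one interpolant. The hypothesis $\gcd(k,n)=1$, combined with Lemma~\ref{lem:z-is-kth-iterate}, gives $z\in\gF_k$. Hence $\min_{g\in\gF_k}L_{\gP}(g)=0$, and the set of minimizers is precisely the set of $g\in\gF_k$ with $g(x)=z(x)$ for all $x\in\sP$. Here we use that $\gP$ is uniform on $\sP$, so every point of $\sP$ carries positive mass and zero loss forces exact agreement on each of them.

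The second step is to show that no other interpolant lies in $\gF_k$. Let $g=f^{\circ k}\in\gF_k$ be such an interpolant. Because $k\ge n-1$, Lemma~\ref{lem:height-collapse} gives $\mathrm{ht}(g)\le 1$. Lemma~\ref{lem:only-two-interpolants} then leaves only two possibilities, $g\in\{z,\tilde z\}$. By Lemma~\ref{lem:tilde-is-kth-iterate}, $\tilde z\in\gF_k$ holds iff $\gcd(k,n-1)=1$. Our hypothesis is $\gcd(k,n-1)>1$, so $\tilde z\notin\gF_k$, and therefore $g=z$. We should also check that $z\neq\tilde z$, though uniqueness does not actually rely on this. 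The two maps differ at $q$ because $z(q)\neq z^{\circ 2}(q)$, which holds since $n\ge 3$.

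Together these give $\argmin_{g\in\gF_k}L_{\gP}(g)=\{z\}$. The OOD claim is then immediate: $\gQ$ is the point mass at $q$, and $\hat g(q)=z(q)$, so $L_{\gQ}(\hat g)=0$.

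I expect no serious obstacle, since all the combinatorial work sits in the lemmas. The one point needing care is the reduction of "minimizer" to "interpolant on $\sP$." This is what licenses applying Lemma~\ref{lem:only-two-interpolants}, and it requires exhibiting a zero-loss element of $\gF_k$, which is exactly what the condition $\gcd(k,n)=1$ supplies.
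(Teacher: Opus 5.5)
Your proposal is correct and matches the paper's proof step for step: \(\gcd(k,n)=1\) puts \(z\in\gF_k\), so the minimizers are exactly the interpolants on \(\sP\), and then Lemma~\ref{lem:height-collapse}, Lemma~\ref{lem:only-two-interpolants}, and Lemma~\ref{lem:tilde-is-kth-iterate} (using \(\gcd(k,n-1)>1\)) leave only \(z\). Your extra remarks, that every point of \(\sP\) has positive mass and that \(z\neq\tilde z\), are harmless clarifications that the paper leaves implicit.
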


\begin{proof}
Because \(\gcd(k,n)=1\), Lemma~\ref{lem:z-is-kth-iterate} implies \(z\in\gF_k\), hence the minimum possible training loss over \(\gF_k\)
is at most \(L_{\gP}(z)=0\) and therefore equals \(0\).

Now let \(g\in\gF_k\) satisfy \(L_{\gP}(g)=0\).
Then \(g(x)=z(x)\) for all \(x\in\sP\).
Since \(g\in\gF_k\) and \(k\ge n-1\), Lemma~\ref{lem:height-collapse} gives \(\mathrm{ht}(g)\le 1\).
Lemma~\ref{lem:only-two-interpolants} then forces \(g\in\{z,\tilde z\}\).

Finally, because \(\gcd(k,n-1)>1\), Lemma~\ref{lem:tilde-is-kth-iterate} implies \(\tilde z\notin\gF_k\).
Thus the only hypothesis in \(\gF_k\) with zero training loss is \(z\), so the minimizer is unique and equals \(z\).
Since \(\gQ\) is the point mass at \(q\), we have \(L_{\gQ}(z)=\mathbf{1}\{z(q)\neq z(q)\}=0\).
\end{proof}

\subsection{Number theory: densities, totients, and Dirichlet}

\begin{lemma}[Densities of multiples and coprimes]
\label{lem:density-basic}
Let \(m\ge 1\) be an integer.
\begin{enumerate}
\item The set \(\{k\in\mathbb N: m\mid k\}\) has density \(1/m\).
\item The set \(\{k\in\mathbb N:\gcd(k,m)=1\}\) has density \(\varphi(m)/m\), where \(\varphi\) is Euler's totient function.
\end{enumerate}
\end{lemma}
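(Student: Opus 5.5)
Both parts rest on the same observation: each set is periodic modulo $m$, so its density is simply the proportion of residues mod $m$ that lie in the set. I will establish this once for an arbitrary set $A\subseteq\mathbb N$ that is a union of residue classes mod $m$, and then count residues in each case.

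\emph{Periodic sets.} Let $A$ be determined by residues mod $m$, and let $c$ be the number of residue classes in $\{0,1,\dots,m-1\}$ that $A$ contains. Write $N=qm+s$ with $0\le s<m$. The interval $\{1,\dots,N\}$ splits into $q$ complete blocks of $m$ consecutive integers, each contributing exactly $c$ elements of $A$, plus a leftover piece of length $s<m$. Hence
\[
qc\le |A\cap\{1,\dots,N\}|\le qc+m .
\]
Dividing by $N$ and letting $N\to\infty$ (so that $q/N\to 1/m$) shows that the limit of the ratio exists and equals $c/m$. In particular the $\liminf$ defining $\dens$ is attained as a true limit, which matches the earlier parenthetical remark that these densities are genuine limits.

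\emph{Part 1.} Whether $m\mid k$ depends only on $k \bmod m$, and exactly one residue, $0$, qualifies. So $c=1$ and the density is $1/m$.

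\emph{Part 2.} Since $\gcd(k,m)=\gcd(k \bmod m,\,m)$, coprimality depends only on the residue. Among the residues $0,\dots,m-1$, the number coprime to $m$ is $\varphi(m)$ by definition. This also covers $m=1$, where the single residue $0$ satisfies $\gcd(0,1)=1$ and $\varphi(1)=1$. So $c=\varphi(m)$ and the density is $\varphi(m)/m$.

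No step is a real obstacle. The only points that need care are the $O(m)$ bound on the boundary block and the edge case $m=1$ in part 2.
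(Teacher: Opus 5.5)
Your proof is correct and takes the same approach as the paper: both use periodicity modulo $m$ and count the qualifying residues in each block of length $m$ (one residue for multiples of $m$, $\varphi(m)$ for integers coprime to $m$). Your sandwich bound $qc\le |A\cap\{1,\dots,N\}|\le qc+m$ and your check of the $m=1$ case simply spell out the boundary and edge-case details that the paper's one-line argument leaves implicit.
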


\begin{proof}
Both statements follow from periodicity modulo \(m\).
For (1), in each block of \(m\) consecutive integers there is exactly one multiple of \(m\).
For (2), among the residues \(\{1,2,\dots,m\}\) there are exactly \(\varphi(m)\) residues that are coprime to \(m\),
and each block of length \(m\) repeats the same pattern.
\end{proof}

\begin{lemma}[Divisibility can only decrease \(\varphi(\cdot)/(\cdot)\)]
\label{lem:phi-ratio-monotone}
If \(M\mid N\), then
\[
\frac{\varphi(N)}{N}\le \frac{\varphi(M)}{M}.
\]
\end{lemma}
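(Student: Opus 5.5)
The plan is to use the product formula for Euler's totient,
\[
\frac{\varphi(m)}{m}=\prod_{p\mid m}\Bigl(1-\frac1p\Bigr),
\]
where the product runs over the distinct primes \(p\) dividing \(m\). With this formula, the claim reduces to comparing the prime divisors of \(M\) and \(N\).

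First, I will justify the product formula. Either I quote it as standard, or I rederive it from Lemma~\ref{lem:density-basic}(2) by inclusion--exclusion over the primes dividing \(m\). In that route, the density of integers coprime to \(m\) is
\[
\sum_{d\mid \mathrm{rad}(m)}\frac{\mu(d)}{d}=\prod_{p\mid m}\Bigl(1-\frac1p\Bigr),
\]
using Lemma~\ref{lem:density-basic}(1) for the density of multiples of each squarefree \(d\). Since the statement only needs the formula, I will simply cite it.

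Second, I use the hypothesis \(M\mid N\). Every prime \(p\) dividing \(M\) also divides \(N\), so the set of primes dividing \(M\) is contained in the set of primes dividing \(N\). Hence
\[
\frac{\varphi(N)}{N}=\frac{\varphi(M)}{M}\cdot\prod_{p\mid N,\ p\nmid M}\Bigl(1-\frac1p\Bigr).
\]
Each extra factor lies in \((0,1)\), so the right-hand side is at most \(\varphi(M)/M\), which proves the inequality.

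There is also a density-only alternative that avoids the product formula. If \(\gcd(k,N)=1\), then \(\gcd(k,M)=1\), because any common divisor of \(k\) and \(M\) also divides \(N\). So the set of integers coprime to \(N\) is contained in the set of integers coprime to \(M\). Density is monotone under inclusion, and both densities exist by periodicity, so Lemma~\ref{lem:density-basic}(2) gives \(\varphi(N)/N\le\varphi(M)/M\) directly. This is the cleaner route and fits the paper's flow, so I will use it.

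I do not expect a real obstacle. The only point needing care is that both densities are true limits, not just liminfs, so monotonicity under set inclusion applies cleanly. This follows from the periodicity argument already given in Lemma~\ref{lem:density-basic}.
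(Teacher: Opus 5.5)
Your proposal is correct. The product-formula argument you sketch first is exactly the paper's proof: the paper writes $\varphi(T)/T=\prod_{p\mid T}(1-1/p)$ and observes that every prime divisor of $M$ divides $N$, so the product for $N$ contains all factors of the product for $M$ plus possibly extra factors $1-1/p\le 1$.

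The route you actually commit to is genuinely different. You show $\{k:\gcd(k,N)=1\}\subseteq\{k:\gcd(k,M)=1\}$ and compare densities via Lemma~\ref{lem:density-basic}(2). This avoids the product formula, which rests on multiplicativity of $\varphi$ or an inclusion--exclusion argument. It uses only the definition of $\varphi$ and periodicity, so it stays inside the paper's density framework. It does not even need limits. Since $M\mid N$, the range $\{1,\dots,N\}$ consists of $N/M$ complete residue systems modulo $M$, so counting over that single block gives $\varphi(N)\le (N/M)\,\varphi(M)$ directly. The care you flag about true limits versus $\liminf$ is also unnecessary, because $\liminf$-density is already monotone under inclusion.

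The paper's route, in exchange, gives the exact identity $\varphi(N)/N=\tfrac{\varphi(M)}{M}\prod_{p\mid N,\,p\nmid M}(1-1/p)$. This quantifies the gap and shows that equality holds precisely when $M$ and $N$ have the same prime divisors. It also reuses the formula the paper needs anyway in Lemma~\ref{lem:small-phi-ratio}.
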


\begin{proof}
Using the prime-factor formula,
\[
\frac{\varphi(T)}{T}=\prod_{p\mid T}\left(1-\frac1p\right).
\]
If \(M\mid N\), then every prime divisor of \(M\) is also a prime divisor of \(N\), so the product defining \(\varphi(N)/N\)
contains all factors from \(\varphi(M)/M\), and possibly additional factors of the form \((1-1/p)\le 1\).
Hence \(\varphi(N)/N\le \varphi(M)/M\).
\end{proof}

\begin{lemma}[Making \(\varphi(M)/M\) arbitrarily small]
\label{lem:small-phi-ratio}
For every \(\delta>0\) there exists an integer \(M\ge 2\) such that
\[
\frac{\varphi(M)}{M}<\delta.
\]
\end{lemma}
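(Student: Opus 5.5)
We take \(M\) to be a primorial, i.e.\ the product of the first \(r\) primes \(p_1=2,p_2=3,\dots,p_r\), and use the prime-factor formula from the proof of Lemma~\ref{lem:phi-ratio-monotone}:
\[
\frac{\varphi(M)}{M}=\prod_{i=1}^{r}\left(1-\frac1{p_i}\right).
\]
The plan is to show that this product tends to \(0\) as \(r\to\infty\). Then, given \(\delta>0\), choosing \(r\) large enough gives \(\varphi(M)/M<\delta\), and \(M\ge 2\) holds automatically since \(r\ge 1\).

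To show the product tends to \(0\), we invert it and use the geometric series:
\[
\prod_{i=1}^r\left(1-\frac1{p_i}\right)^{-1}=\prod_{i=1}^r\sum_{j\ge 0}p_i^{-j}.
\]
Expanding this finite product of absolutely convergent series gives \(\sum 1/m\) over all \(m\) whose prime factors lie among \(p_1,\dots,p_r\). By unique factorization, each such \(m\) appears exactly once. Every integer \(m\le p_r\) has all its prime factors at most \(p_r\), so it appears in the sum. Hence
\[
\prod_{i=1}^r\left(1-\frac1{p_i}\right)^{-1}\ \ge\ \sum_{m=1}^{p_r}\frac1m.
\]
There are infinitely many primes, so \(p_r\to\infty\). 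The harmonic series diverges, so the right-hand side tends to infinity. Therefore \(\varphi(M)/M\to 0\).

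The only step needing any care is this divergence argument, which is Euler's classical proof. An alternative route, if one prefers to avoid rearranging series, is the bound \(\log(1-x)\le -x\) combined with divergence of \(\sum 1/p\); but that divergence is itself usually proved via the Euler product, so the argument above is the more self-contained choice. No earlier results beyond the totient product formula are needed.
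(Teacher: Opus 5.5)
Your proof is correct. It shares the paper's skeleton: take \(M\) to be a primorial and use \(\varphi(M)/M=\prod_{i\le r}(1-1/p_i)\). The difference is how you show this product tends to \(0\).

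The paper takes logarithms, applies \(\log(1-x)\le -x\), and cites Euler's theorem that \(\sum_p 1/p\) diverges. You instead invert the product, expand it as an Euler product, and use unique factorization to bound it below by the harmonic partial sum \(\sum_{m\le p_r}1/m\).

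What each route buys:
\begin{itemize}
\item The paper's version is shorter, because it delegates the analytic content to a cited classical theorem.
\item Yours is self-contained. It needs only the divergence of the harmonic series and the infinitude of primes. Your inequality is exactly the step usually used to prove the theorem the paper cites, so the two arguments are logically close.
\item Yours also gives an explicit quantitative bound, \(\varphi(M)/M\le 1/\sum_{m\le p_r}1/m<1/\log p_r\). So to get \(\varphi(M)/M<\delta\) it suffices to take \(r\) with \(p_r>e^{1/\delta}\). The paper's argument, as written, gives no explicit choice of \(M\).
\end{itemize}
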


\begin{proof}
Let \(p_1<p_2<\cdots\) be the primes and define the primorial
\[
M_t \coloneqq \prod_{i=1}^t p_i.
\]
Then
\[
\frac{\varphi(M_t)}{M_t}=\prod_{i=1}^t \left(1-\frac1{p_i}\right).
\]
Using \(\log(1-x)\le -x\) for \(x\in(0,1)\),
\[
\log\!\left(\prod_{i=1}^t \left(1-\frac1{p_i}\right)\right)
=\sum_{i=1}^t \log\!\left(1-\frac1{p_i}\right)
\le -\sum_{i=1}^t \frac1{p_i}.
\]
A classical theorem of Euler states that \(\sum_{p\ \mathrm{prime}} \frac1p\) diverges, so the RHS tends to \(-\infty\),
forcing \(\prod_{i=1}^t (1-1/p_i)\to 0\). Hence for large enough \(t\), \(\varphi(M_t)/M_t<\delta\); set \(M\coloneqq M_t\).
\end{proof}

\begin{theorem}[Dirichlet's theorem on arithmetic progressions]
\label{thm:dirichlet}
If \(a,m\in\mathbb N\) satisfy \(\gcd(a,m)=1\), then there exist infinitely many primes \(p\) such that \(p\equiv a\pmod m\).
\end{theorem}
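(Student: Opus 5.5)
This is a classical result, and in the paper it would simply be cited rather than proved. If I were to prove it, I would follow Dirichlet's analytic argument, modernized as in standard texts. Fix $m$ and $a$ with $\gcd(a,m)=1$. The goal is to show
\[
\sum_{p\equiv a\ (\mathrm{mod}\ m)} \frac{1}{p^s}\;\longrightarrow\;\infty \quad\text{as } s\to 1^+ .
\]
This forces infinitely many such primes, since a finite sum stays bounded. It mirrors the use of Euler's divergence of $\sum_p 1/p$ in Lemma~\ref{lem:small-phi-ratio}.

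First I isolate the progression using Dirichlet characters. These are the homomorphisms $\chi:(\mathbb{Z}/m\mathbb{Z})^\times\to\mathbb{C}^\times$, extended by $0$ on residues not coprime to $m$. Orthogonality on the finite abelian group gives
\[
\frac{1}{\varphi(m)}\sum_\chi \overline{\chi(a)}\,\chi(n)=\mathbf{1}\{n\equiv a \pmod m\}.
\]
Next I define $L(s,\chi)=\sum_{n\ge1}\chi(n)n^{-s}$ for $s>1$. It has the Euler product $\prod_p(1-\chi(p)p^{-s})^{-1}$, so $\log L(s,\chi)=\sum_p \chi(p)p^{-s}+O(1)$ uniformly as $s\to1^+$. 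Combining this with orthogonality,
\[
\sum_{p\equiv a}\frac{1}{p^s}=\frac{1}{\varphi(m)}\sum_\chi\overline{\chi(a)}\log L(s,\chi)+O(1).
\]
For the principal character $\chi_0$, $L(s,\chi_0)$ equals $\zeta(s)$ times finitely many Euler factors. Hence $\log L(s,\chi_0)\to\infty$.

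It then suffices to show that for every nonprincipal $\chi$, $\log L(s,\chi)$ stays bounded as $s\to1^+$. Partial summation, using the bounded character sums $\bigl|\sum_{n\le N}\chi(n)\bigr|\le m$, shows that $L(s,\chi)$ extends continuously to $s>0$. So boundedness reduces to showing $L(1,\chi)\neq0$.

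The main obstacle is exactly this nonvanishing, $L(1,\chi)\ne0$ for $\chi\neq\chi_0$. I would first show that $F(s)=\prod_\chi L(s,\chi)$ satisfies $F(s)\ge1$ for real $s>1$. This holds because $\log F(s)=\varphi(m)\sum_{p^k\equiv1}\frac{1}{kp^{ks}}\ge0$. For complex (non-real) $\chi$, if $L(1,\chi)=0$ then also $L(1,\bar\chi)=0$. That gives two zeros against the single simple pole of $L(s,\chi_0)$, which forces $F(s)\to0$, a contradiction.

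For real quadratic $\chi$, this argument does not work, and I would treat the case separately. I would consider $\sum_n r(n)n^{-s}$ with $r=1*\chi\ge0$, noting that $r(n^2)\ge1$. One route is Landau's theorem on Dirichlet series with nonnegative coefficients. Another is the elementary argument that $\sum_{n\le x} r(n)/\sqrt n$ is both $\gg\log x$ and, if $L(1,\chi)=0$, is $O(1)$. Either route rules out $L(1,\chi)=0$ and completes the proof.
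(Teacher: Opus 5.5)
The paper does not prove this statement: it quotes Dirichlet's theorem as a classical black box and uses it exactly once, in Step~2 of the density argument, with $a=1$ and $m=M$ a primorial. Your outline is the standard analytic proof and is correct. Character orthogonality isolates the progression, and the principal character carries the pole of $\zeta$. Everything then reduces to $L(1,\chi)\neq 0$: for complex $\chi$ you use $F(s)=\prod_\chi L(s,\chi)\ge 1$, and for real $\chi$ the nonnegative convolution $r=1*\chi$ with $r(n^2)\ge 1$.

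Two small points need tightening.

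\textbf{The zero-versus-pole count.} In the complex case, ``two zeros against one simple pole'' needs $L(s,\chi)=O(s-1)$ near $s=1$ when $L(1,\chi)=0$. Mere continuity only gives $o(1)$, which does not beat the $1/(s-1)$ pole. You should note that partial summation also handles $\sum_n\chi(n)(\log n)\,n^{-s}$, and hence gives a $C^1$ continuation (indeed a holomorphic one on $\mathrm{Re}\,s>0$). The Landau route needs that holomorphy too, together with that of $\zeta(s)-\frac{1}{s-1}$.

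\textbf{The branch of the logarithm.} Define $\log L(s,\chi)$ as the series $\sum_{p,k}\chi(p)^k/(k\,p^{ks})$. Its boundedness as $s\to1^+$ then follows from $L(1,\chi)\neq0$, because it is a continuous logarithm of a function that is continuous and nonvanishing on $[1,2]$.

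As for what each approach buys: citing is the right call for the paper, while your argument proves far more than is needed. Since only primes $n\equiv 1\pmod M$ are used, the appendix could be made self-contained without $L$-functions via cyclotomic polynomials.

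\textbf{Step 1.} If a prime $p\nmid M$ divides $\Phi_M(x)$, then $x$ has order exactly $M$ modulo $p$, because $X^M-1=\prod_{d\mid M}\Phi_d(X)$ is separable mod $p$. Hence $M\mid p-1$.

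\textbf{Step 2.} Given primes $p_1,\dots,p_r\equiv1\pmod M$, take $t$ large enough that $\Phi_M(t\,M p_1\cdots p_r)$ exceeds $1$. Since $\Phi_M(0)=1$ for $M\ge 2$, any prime factor of this value is coprime to $Mp_1\cdots p_r$. By Step~1 it is therefore a new prime $\equiv 1\pmod M$.
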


\begin{lemma}[A density lower bound for ``good'' exponents]
\label{lem:density-good-k}
Let \(n\ge 3\) be prime and define
\[
G_n \coloneqq \{k\in\mathbb N:\gcd(k,n)=1\ \text{and}\ \gcd(k,n-1)>1\}.
\]
Then \(G_n\) has density at least
\[
\dens(G_n)\ \ge\ 1-\frac1n-\frac{\varphi(n-1)}{n-1}.
\]
\end{lemma}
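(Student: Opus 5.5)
The plan is to bound the density of the complement of \(G_n\) with a union bound, using Lemma~\ref{lem:density-basic}. A positive integer \(k\) fails to lie in \(G_n\) exactly when \(\gcd(k,n)>1\) or \(\gcd(k,n-1)=1\). Since \(n\) is prime, \(\gcd(k,n)>1\) is equivalent to \(n\mid k\). This gives the inclusion
\[
\mathbb N\setminus G_n \subseteq A\cup B,\qquad A\coloneqq\{k: n\mid k\},\quad B\coloneqq\{k:\gcd(k,n-1)=1\}.
\]

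By Lemma~\ref{lem:density-basic}, \(A\) has density \(1/n\) and \(B\) has density \(\varphi(n-1)/(n-1)\). Both sets are periodic, modulo \(n\) and \(n-1\) respectively. Hence their counting functions satisfy
\[
|A\cap[1,N]|\le N/n+1,\qquad |B\cap[1,N]|\le N\varphi(n-1)/(n-1)+\varphi(n-1),
\]
because each complete block contributes exactly its share and at most one partial block remains. It follows that
\[
|G_n\cap[1,N]|\ \ge\ N-|A\cap[1,N]|-|B\cap[1,N]|\ \ge\ N\Bigl(1-\tfrac1n-\tfrac{\varphi(n-1)}{n-1}\Bigr)-O(1).
\]
Dividing by \(N\) and taking \(\liminf\) gives the claimed lower bound on \(\dens(G_n)\).

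There is no real obstacle in this argument. The one point to be careful about is that densities are defined via \(\liminf\). Subadditivity is applied to the actual finite counts before taking limits, so this causes no trouble.

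Alternatively, \(G_n\) is periodic modulo \(n(n-1)\), and since \(\gcd(n,n-1)=1\) the Chinese remainder theorem makes the conditions on \(k\bmod n\) and \(k\bmod (n-1)\) independent. This gives the exact density \((1-\tfrac1n)(1-\tfrac{\varphi(n-1)}{n-1})\), which is at least the stated bound. The union bound is all that is needed, though.
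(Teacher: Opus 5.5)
Your proof is correct and follows the paper's argument: you cover the complement of \(G_n\) by \(\{k: n\mid k\}\cup\{k:\gcd(k,n-1)=1\}\) and apply a union bound with the densities from Lemma~\ref{lem:density-basic}. Your version is slightly more careful, since you apply subadditivity to the finite counts before taking the \(\liminf\); the paper instead takes densities directly, relying implicitly on these periodic sets having genuine limiting densities.
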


\begin{proof}
Because \(n\) is prime, \(\gcd(k,n)=1\) fails exactly when \(n\mid k\), which has density \(1/n\) by Lemma~\ref{lem:density-basic}(1).
Also \(\gcd(k,n-1)>1\) fails exactly when \(\gcd(k,n-1)=1\), which has density \(\varphi(n-1)/(n-1)\) by Lemma~\ref{lem:density-basic}(2).

Thus the complement \(G_n^c\) is contained in the union
\[
\{k: n\mid k\}\ \cup\ \{k:\gcd(k,n-1)=1\}.
\]
Taking densities and using subadditivity (a union bound) yields
\[
\dens(G_n^c)\ \le\ \frac1n+\frac{\varphi(n-1)}{n-1}.
\]
Therefore \(\dens(G_n)=1-\dens(G_n^c)\ge 1-\frac1n-\frac{\varphi(n-1)}{n-1}\).
\end{proof}

\subsection{High-density uniqueness for the cycle task via large iterates}

Fix \(\varepsilon>0\).
We will choose a prime \(n\) such that
\begin{equation}
\label{eq:choose-n-conditions}
\frac1n<\frac{\varepsilon}{2}
\qquad\text{and}\qquad
\frac{\varphi(n-1)}{n-1}<\frac{\varepsilon}{2}.
\end{equation}
Then Lemma~\ref{lem:density-good-k} will imply \(\dens(G_n)\ge 1-\varepsilon\).
Finally, Theorem~\ref{thm:unique-good} will transfer this density statement to the uniqueness property.

\smallskip
\noindent\textbf{Step 1: choose \(M\) with \(\varphi(M)/M<\varepsilon/2\).}
By Lemma~\ref{lem:small-phi-ratio}, choose an integer \(M\ge 2\) such that
\[
\frac{\varphi(M)}{M}<\frac{\varepsilon}{2}.
\]

\smallskip
\noindent\textbf{Step 2: use Dirichlet to pick a large prime \(n\equiv 1\pmod M\).}
By Dirichlet's theorem (Theorem~\ref{thm:dirichlet}) with \(a=1\) and modulus \(m=M\), there exist infinitely many primes \(n\) with
\[
n\equiv 1\pmod M.
\]
Choose such a prime \(n\) large enough that \(1/n<\varepsilon/2\).
Then \(M\mid (n-1)\).

\smallskip
\noindent\textbf{Step 3: bound \(\varphi(n-1)/(n-1)\).}
Since \(M\mid (n-1)\), Lemma~\ref{lem:phi-ratio-monotone} gives
\[
\frac{\varphi(n-1)}{n-1}\le \frac{\varphi(M)}{M}<\frac{\varepsilon}{2}.
\]
Together with \(1/n<\varepsilon/2\), this establishes~\eqref{eq:choose-n-conditions}.

\smallskip
\noindent\textbf{Step 4: conclude a density-\(\ge 1-\varepsilon\) set of ``good'' exponents.}
Define
\[
G_n = \{k\in\mathbb N:\gcd(k,n)=1\ \text{and}\ \gcd(k,n-1)>1\}.
\]
By Lemma~\ref{lem:density-good-k} and~\eqref{eq:choose-n-conditions},
\[
\dens(G_n)\ge 1-\frac1n-\frac{\varphi(n-1)}{n-1}\ \ge\ 1-\varepsilon.
\]
Since removing finitely many integers does not change density, the same bound holds if we restrict to $k\ge n-1$:
\[
\dens(\{k\ge n-1:\,k\in G_n\})=\dens(G_n)\ge 1-\varepsilon.
\]

\smallskip
\noindent\textbf{Step 5: transfer ``good gcd'' to uniqueness.}
For any \(k\geq n-1\) with \(k\in G_n\), we have \(\gcd(k,n)=1\), and \(\gcd(k,n-1)>1\).
Therefore Theorem~\ref{thm:unique-good} applies and yields
\[
\arg\min_{g\in\gF_k}L_{\gP}(g)=\{z\}.
\]
Consequently the unique minimizer is \(\hat g=z\), and since \(\gQ\) is a point mass at \(q\) we have
\(L_{\gQ}(\hat g)=L_{\gQ}(z)=0\).

Thus for at least a $(1-\varepsilon)$ fraction of integers $k\ge n-1$, the uniqueness property holds.

\section{Proof of Theorem~\ref{thm:two_point_iso}}
\label{sec:proof_two_point_iso}

\subsection{Explicit Form of the Infinite-Context Limit}\label{subsec:infinite_context_explicit}

For each $n$, let $\mS^{(n)}_{\mSigma,\bm\beta}\in\R^{d_e\times(n+1)}$ denote the input sequence constructed from
$n$ samples $\{(\vx_i,y_i)\}_{i=1}^n$ via \eqref{eq:linreg_sequence}. Let $\mX=[\vx_1,\dots,\vx_n]\in\R^{d\times n}$
and $\vy=(y_1,\dots,y_n)^\top\in\R^n$. Define the extracted coefficient vector at
generation step $t$ by
\[
\hat{\vw}^{(n)}_t \;:=\; \Extract\!\left(\Auto^{\circ t}\big(\mS^{(n)}_{\mSigma,\bm\beta}\mid \mV,\mW\big)\right)\in\R^d ,
\]
so that $\hat{\vw}^{(n)}_k = f_k(\mS^{(n)}_{\mSigma,\bm\beta}\mid \mV,\mW)$ by definition of $f_k$.

A direct block-matrix calculation using the sparsity of $\tilde{\mV},\tilde{\mW}$ shows that, for every $t\ge 0$,
\begin{align}
\label{eq:update_general_finite_n_app}
\hat{\vw}^{(n)}_{t+1}
&=
\hat{\vw}^{(n)}_t+\mV\Big(\widehat{\mSigma}^{(n)}\,\mW\,\hat{\vw}^{(n)}_t-\widehat{\mathbf{u}}^{(n)}\Big),
\qquad
\hat{\vw}^{(n)}_0=\mathbf{0},
\end{align}
where
\[
\widehat{\mSigma}^{(n)}:=\frac{1}{n}\sum_{i=1}^n \vx_i\vx_i^\top=\frac{1}{n}\mX\mX^\top,
\qquad
\widehat{\mathbf{u}}^{(n)}:=\frac{1}{n}\sum_{i=1}^n \vx_i y_i=\frac{1}{n}\mX\vy .
\]
In particular, conditioned on the dataset, the induced update depends on the samples only through the empirical moments
$\widehat{\mSigma}^{(n)}$ and $\widehat{\mathbf{u}}^{(n)}$.

Unrolling the recursion \eqref{eq:update_general_finite_n_app} yields the closed form
\begin{align}
\label{eq:finite_n_closed_form}
\hat{\vw}^{(n)}_k
=
-\sum_{t=0}^{k-1}\Big(\mI+\mV\widehat{\mSigma}^{(n)}\mW\Big)^t\,\mV\,\widehat{\mathbf{u}}^{(n)} .
\end{align}
For fixed $(\mV,\mW)$ and $k$, define the deterministic map
\[
g_k(\mSigma,\mathbf{u})
\;:=\;
-\sum_{t=0}^{k-1}\Big(\mI+\mV\mSigma\mW\Big)^t\,\mV\,\mathbf{u},
\qquad
(\mSigma,\mathbf{u})\in\R^{d\times d}\times\R^d.
\]
Then \eqref{eq:finite_n_closed_form} can be written as
\[
\hat{\vw}^{(n)}_k
=
g_k\!\big(\widehat{\mSigma}^{(n)},\widehat{\mathbf{u}}^{(n)}\big).
\]
Since $k$ is fixed, $g_k$ is obtained from its inputs via finitely many additions and multiplications (matrix products and sums in a fixed order);
equivalently, each coordinate of $g_k(\mSigma,\mathbf{u})$ is a multivariate polynomial in the entries of $(\mSigma,\mathbf{u})$.
In particular, $g_k$ is continuous. Therefore, if $(\widehat{\mSigma}^{(n)},\widehat{\mathbf{u}}^{(n)})\to(\mSigma,\mathbf{u})$ almost surely, then
$g_k(\widehat{\mSigma}^{(n)},\widehat{\mathbf{u}}^{(n)})\to g_k(\mSigma,\mathbf{u})$ almost surely (by the Continuous Mapping Theorem).

Under our data model $\vx_i\sim\cN(0,\mSigma)$ and $y_i=\vx_i^\top\bm\beta$, we have almost surely
\[
\widehat{\mSigma}^{(n)}\to \mSigma,
\qquad
\widehat{\mathbf{u}}^{(n)}=\frac{1}{n}\sum_{i=1}^n \vx_i\vx_i^\top\bm\beta \to \mSigma\bm\beta,
\qquad (n\to\infty).
\]
Combining these facts yields the almost sure limit
\begin{align}
\label{eq:population_limit_closed_form}
\lim_{n\to\infty} f_k\big(\mS^{(n)}_{\mSigma,\bm\beta}\mid \mV,\mW\big)
=
g_k(\mSigma,\mSigma\bm\beta)
=
-\sum_{t=0}^{k-1}\Big(\mI+\mV\mSigma\mW\Big)^t\,\mV\,\mSigma\,\bm\beta .
\end{align}
Equivalently, writing $\hat{\vw}_k:=g_k(\mSigma,\mSigma\bm\beta)$ and letting $\mA:=\mI+\mV\mSigma\mW$,
\begin{align}
\label{eq:population_limit_error_form}
\hat{\vw}_k-\bm\beta
=
-\Big(\sum_{t=0}^{k-1}\mA^{t}\,\mV\mSigma+\mI\Big)\bm\beta .
\end{align}

\subsection{Characterizing in-distribution minimizers and convergence to GD} \label{subsec:proof_two_point_iso_main}

We repeatedly invoke the finite-$n$ update \eqref{eq:update_general_finite_n_app} and the population-limit expression
\eqref{eq:population_limit_closed_form}--\eqref{eq:population_limit_error_form} established in
Subsection~\ref{subsec:infinite_context_explicit}.

\paragraph{Step 1: Characterizing generic GD-implementing parameters.}
Recall the definition of $\Theta_{\rm GD}$ from \eqref{eq:theta_gd_def}: we say that $(\mV,\mW)$ implements one-step GD
if there exists $\eta>0$ such that, for every dataset (equivalently, for all realizable empirical moments
$(\widehat{\mSigma},\widehat{\mathbf{u}})$) and every iterate $\vw\in\R^d$, the induced update equals the GD update.

By \eqref{eq:update_general_finite_n_app}, the transformer-induced one-step update has the form
\[
U_{\mV,\mW}(\vw;\widehat{\mSigma},\widehat{\mathbf{u}})
:= \vw + \mV\big(\widehat{\mSigma}\mW\vw-\widehat{\mathbf{u}}\big),
\]
whereas one-step GD on the squared loss is
\[
U_{{\rm GD},\eta}(\vw;\widehat{\mSigma},\widehat{\mathbf{u}})
:= \vw - \eta\big(\widehat{\mSigma}\vw-\widehat{\mathbf{u}}\big).
\]
If $U_{\mV,\mW}\equiv U_{{\rm GD},\eta}$ holds for all $\vw$ and all $(\widehat{\mSigma},\widehat{\mathbf{u}})$, then
comparing the coefficients of $\widehat{\mathbf{u}}$ forces $-\mV=\eta\mI_d$, i.e., $\mV=-\eta\mI_d$.
Substituting this into the coefficient of $\widehat{\mSigma}\vw$ yields
\[
\mV\widehat{\mSigma}\mW=-\eta\widehat{\mSigma}
\quad\text{for all }\widehat{\mSigma},
\]
which implies $\mW=\mI_d$. Hence
\begin{equation}
\label{eq:theta_gd}
\Theta_{\rm GD}
=\Big\{\,(-\eta\mI_d,\ \mI_d)\ \big|\ \eta>0\,\Big\}.
\end{equation}

\paragraph{Step 2: Infinite-context error under isotropic $\mSigma=\gamma \mI_d$.}
We now work in the infinite-context limit. By \eqref{eq:population_limit_error_form}, for any covariance
$\mSigma$ we have
\[
\hat{\vw}_k-\bm\beta
=
-\Big(\sum_{t=0}^{k-1}\mA^{t}\,\mV\mSigma+\mI\Big)\bm\beta,
\qquad
\mA:=\mI+\mV\mSigma\mW.
\]
Specializing to the isotropic case $\mSigma=\gamma\mI_d$ yields
\begin{align}
\label{eq:error_matrix_def_reused}
\hat{\vw}_k-\bm\beta
= -\mA_\gamma(\mV,\mW)\,\bm\beta,
\qquad
\mA_\gamma(\mV,\mW)
:=\sum_{t=0}^{k-1}(\mI+\gamma\mV\mW)^t\,\gamma\mV+\mI.
\end{align}
Since $\bm\beta\sim\gN(0,\mI_d)$, we have $\E_{\bm\beta}\|\mA_\gamma\bm\beta\|^2=\|\mA_\gamma\|_F^2$. Therefore, under
$\gP$ (uniform over $\gamma\in\{\gamma_1,\gamma_2\}$),
\begin{align}
\label{eq:LP_frob_reused}
L_{\gP}(f_k)
=\frac{1}{2}\|\mA_{\gamma_1}(\mV,\mW)\|_F^2+\frac{1}{2}\|\mA_{\gamma_2}(\mV,\mW)\|_F^2.
\end{align}

\paragraph{Step 3: Reduce to a scalar problem via simultaneous orthogonal diagonalizability.}
Recall that we only consider $\mV,\mW$ that are simultaneously orthogonally diagonalizable over $\sR$, i.e., there exists an orthogonal matrix
$\mU$ such that
\[
\mV=\mU\diag(v_1,\dots,v_d)\mU^\top,\qquad
\mW=\mU\diag(w_1,\dots,w_d)\mU^\top.
\]
Since $\mA_\gamma(\mV,\mW)$ is a matrix polynomial in $\mV$ and $\mW$, it is diagonalized by the same $\mU$:
\[
\mA_\gamma(\mV,\mW)=\mU\diag(A_{\gamma,1},\dots,A_{\gamma,d})\mU^\top,
\]
where, for each $i\in[d]$,
\begin{align}
\label{eq:A_gamma_i_reused}
A_{\gamma,i}
=
1+\gamma v_i\sum_{t=0}^{k-1}(1+\gamma v_i w_i)^t
=
1+\frac{(1+\gamma v_i w_i)^k-1}{w_i}.
\end{align}
The last equation is for $w_i\neq 0$. Note that, if $w_i=0$, then $A_{\gamma,i}=1+\gamma v_i\sum_{t=0}^{k-1}1=1+k\gamma v_i$, which cannot vanish simultaneously for
$\gamma=\gamma_1$ and $\gamma=\gamma_2$ since $\gamma_1\neq\gamma_2$. Hence any zero-loss solution must satisfy $w_i\neq 0$. Because $\mU$ is orthogonal, $\|\mA_\gamma\|_F^2=\sum_{i=1}^d A_{\gamma,i}^2$, and \eqref{eq:LP_frob_reused} becomes
\begin{align}
\label{eq:LP_scalar_sum_reused}
L_{\gP}(f_k)
=\frac{1}{2}\sum_{i=1}^d A_{\gamma_1,i}^2+\frac{1}{2}\sum_{i=1}^d A_{\gamma_2,i}^2,
\end{align}
so the minimization decouples across coordinates.

\paragraph{Step 4: Solve the per-coordinate minimization problem (even $k$).}
Fix $i$ and write $v=v_i$, $w=w_i$. Achieving $L_{\gP}(f_k)=0$ requires $A_{\gamma_1}(v,w)=A_{\gamma_2}(v,w)=0$,
which is equivalent to
\[
(1+\gamma v w)^k = 1-w
\quad\text{for }\gamma\in\{\gamma_1,\gamma_2\}.
\]
In particular, $(1+\gamma_1 v w)^k=(1+\gamma_2 v w)^k$. Let $x:=1+\gamma_1 v w$ and $y:=1+\gamma_2 v w$.
Since $k$ is even and $x,y\in\R$, the equality $x^k=y^k$ implies $|x|=|y|$, hence $x=\pm y$.

If $x=y$, then $(\gamma_1-\gamma_2)vw=0$. Since $\gamma_1\neq\gamma_2$, this forces $vw=0$, and substituting into
$(1+\gamma v w)^k=1-w$ gives $1=1-w$, i.e., $w=0$. As noted in Step~3, the case $w=0$ cannot yield zero loss at both
$\gamma_1$ and $\gamma_2$. Therefore the case $x=y$ is impossible, and we must have $x=-y$, i.e.,
\[
1+\gamma_1 v w = -(1+\gamma_2 v w).
\]
Solving gives $vw=-1/a$, where $a:=(\gamma_1+\gamma_2)/2$. Writing $b:=|\gamma_1-\gamma_2|/2$ and using
$\gamma_1=a+b$ and $\gamma_2=a-b$ then yields
\[
w = 1-\left(\frac{b}{a}\right)^k,
\qquad
v = -\frac{1}{a\left(1-(b/a)^k\right)}.
\]
Thus the pair $(v_i,w_i)$ is uniquely determined and identical for all $i$, and consequently every in-distribution minimizer
$f_k^\star$ must have parameters
\begin{align}
\label{eq:VW_star_closed_reused}
\mW^\star=\Big(1-(b/a)^k\Big)\mI_d,
\qquad
\mV^\star=-\frac{1}{a\left(1-(b/a)^k\right)}\,\mI_d,
\end{align}
which proves item~(1).

\paragraph{Step 5: Compute the OOD loss and monotonicity.}
Plugging \eqref{eq:VW_star_closed_reused} into \eqref{eq:A_gamma_i_reused} with $\gamma=\gamma_{\gQ}$ gives
\[
A_{\gamma_{\gQ}}
=\frac{(a-\gamma_{\gQ})^k-b^k}{a^k-b^k},
\qquad
L_{\gQ}(f_k^\star)=d\,A_{\gamma_{\gQ}}^2
= d\left(\frac{(a-\gamma_{\gQ})^k-b^k}{a^k-b^k}\right)^2.
\]
To show strict monotonicity over even $k$, define
\[
r:=\left|\frac{a-\gamma_{\gQ}}{a}\right|,\qquad s:=\frac{b}{a}.
\]
Under the assumption $\gamma_{\gQ}\in(0,2a)\setminus\{\gamma_1,\gamma_2\}$ we have $r,s\in(0,1)$ and $r\neq s$.
For even $k$ (so $r^k,s^k>0$),
\[
|A_{\gamma_{\gQ}}|
=\left|\frac{r^k-s^k}{1-s^k}\right|.
\]
It suffices to show that the function
\[
\phi(t):=\left|\frac{r^t-s^t}{1-s^t}\right|,\qquad t>0,
\]
is strictly decreasing. Without loss of generality assume $r>s$ (the case $r<s$ is symmetric by absolute value).
Write $r=e^{-\alpha}$ and $s=e^{-\beta}$ with $0<\alpha<\beta$. Then
\[
\phi(t)=\frac{e^{-\alpha t}-e^{-\beta t}}{1-e^{-\beta t}}.
\]
A direct derivative calculation gives
\[
\phi'(t)
=
-\frac{\alpha(e^{\beta t}-1)-\beta(e^{\alpha t}-1)}{(e^{\beta t}-1)^2}\,e^{(\beta-\alpha)t}.
\]
Since $x\mapsto \frac{e^{xt}-1}{x}$ is strictly increasing for $x>0$ (for any fixed $t>0$), and $\beta>\alpha$,
we have $\alpha(e^{\beta t}-1)>\beta(e^{\alpha t}-1)$, hence the numerator is positive and thus $\phi'(t)<0$.
Therefore $\phi(t)$ (and hence $|A_{\gamma_{\gQ}}|$) decreases strictly with $t$, implying that $L_{\gQ}(f_k^\star)=dA_{\gamma_{\gQ}}^2$
decreases strictly as $k$ increases over even integers.

\paragraph{Step 6: Distance to the GD-implementing family.}
Let $(\mV^\star,\mW^\star)$ be as in \eqref{eq:VW_star_closed_reused}. By \eqref{eq:theta_gd}, the closest point in $\Theta_{\rm GD}$
matches $\mV^\star$ (take $\eta=1/[a(1-(b/a)^k)]$) and sets $\mW'=\mI_d$, hence
\[
\dist\big((\mV^\star,\mW^\star),\Theta_{\rm GD}\big)
=\|\mW^\star-\mI_d\|_F^2
=d\left(\frac{b}{a}\right)^{2k},
\]
proving item~(3).
\qed

\section{Proof of Theorem~\ref{thm:main}}

\begin{proof}
Fix \(k\). Let \((\hat g,\hat h)\) be any minimizer of \eqref{eq:erm}, and define
\(\hat f\coloneqq \hat g+\hat h\).

\paragraph{(1) ID upper bound.}
Recall
\[
\delta_k \coloneqq \min_{h\in\mathcal H_k}\Bigl(L_{\mathcal P}(h)+R_H(h)\Bigr).
\]
Since \(0\in\mathcal G_k\), for any \(h\in\mathcal H_k\) the pair \((0,h)\) is feasible for \eqref{eq:erm}, and
\(R(0,h)=R_G(0)+R_H(h)=R_H(h)\) because \(R_G(0)=0\).
By optimality of \((\hat g,\hat h)\),
\[
L_{\mathcal P}(\hat f)+R(\hat g,\hat h)
\;\le\;
L_{\mathcal P}(h)+R(0,h)
\;=\;
L_{\mathcal P}(h)+R_H(h)
\qquad \forall\,h\in\mathcal H_k.
\]
Taking the minimum over \(h\in\mathcal H_k\) yields
\[
L_{\mathcal P}(\hat f)+R(\hat g,\hat h)\le \delta_k.
\]
Since \(R(\hat g,\hat h)=R_G(\hat g)+R_H(\hat h)\ge 0\), we conclude
\[
L_{\mathcal P}(\hat f)\le \delta_k.
\]

\paragraph{(2) OOD lower bound.}
Define the residual associated with \(\hat g\):
\[
\hat r\coloneqq z-\hat g.
\]
Then
\[
L_{\mathcal Q}(\hat f)
=
\mathbb E_{x\sim\mathcal Q}\|\hat f(x)-z(x)\|^2
=
\mathbb E_{x\sim\mathcal Q}\|\hat g(x)+\hat h(x)-z(x)\|^2
=
\mathbb E_{x\sim\mathcal Q}\|\hat h(x)-\hat r(x)\|^2.
\]
By Assumption~\ref{as:decomp} (Shortcut part, \eqref{eq:shortcut_failure}), applied to the minimizer
\((\hat g,\hat h)\) of \eqref{eq:erm},
\[
\mathbb E_{x\sim\mathcal Q}\|\hat h(x)-\hat r(x)\|^2
\;\ge\;
C\,\mathbb E_{x\sim\mathcal Q}\|\hat r(x)\|^2
=
C\,L_{\mathcal Q}(\hat g).
\]
Using Assumption~\ref{as:decomp} (Well-structured part, \eqref{eq:shift_control_G}),
\(L_{\mathcal Q}(\hat g)\ge \frac{1}{\gamma}L_{\mathcal P}(\hat g)\).
Finally, since \(\hat g\in\mathcal G_k\),
\(
L_{\mathcal P}(\hat g)\ge \min_{g\in\mathcal G_k}L_{\mathcal P}(g)=\epsilon_k.
\)
Therefore,
\[
L_{\mathcal Q}(\hat f)\ge \frac{C}{\gamma}\,\epsilon_k = C_1\,\epsilon_k.
\]

\paragraph{(3) OOD upper bound.}
Let \(g_k^\star\in\argmin_{g\in\mathcal G_k}L_{\mathcal P}(g)\) be as in Assumption~\ref{as:decomp}(ii), so
\(L_{\mathcal P}(g_k^\star)=\epsilon_k\) and \(R_G(g_k^\star)\le \rho_k\).
Since \(0\in\mathcal H_k\), the pair \((g_k^\star,0)\) is feasible for \eqref{eq:erm}, and
\(R(g_k^\star,0)=R_G(g_k^\star)+R_H(0)=R_G(g_k^\star)\) because \(R_H(0)=0\).
By optimality of \((\hat g,\hat h)\),
\begin{equation}
\label{eq:obj_le_eps_rho}
L_{\mathcal P}(\hat f)+R_G(\hat g)+R_H(\hat h)
\;\le\;
L_{\mathcal P}(g_k^\star)+R_G(g_k^\star)
\;\le\;
\epsilon_k+\rho_k.
\end{equation}
Dropping the nonnegative term \(R_G(\hat g)\) gives
\begin{equation}
\label{eq:basic_bounds}
L_{\mathcal P}(\hat f)\le \epsilon_k+\rho_k,
\qquad
R_H(\hat h)\le \epsilon_k+\rho_k.
\end{equation}

We now bound \(L_{\mathcal Q}(\hat f)\). Using \(\|u+v\|^2\le 2\|u\|^2+2\|v\|^2\),
\[
L_{\mathcal Q}(\hat f)
=
\mathbb E_{x\sim\mathcal Q}\|(\hat g(x)-z(x))+\hat h(x)\|^2
\le
2L_{\mathcal Q}(\hat g)+2\,\mathbb E_{x\sim\mathcal Q}\|\hat h(x)\|^2.
\]
By \eqref{eq:shift_control_G}, \(L_{\mathcal Q}(\hat g)\le \gamma L_{\mathcal P}(\hat g)\). Also,
\[
L_{\mathcal P}(\hat g)
=
\mathbb E_{x\sim\mathcal P}\|(\hat f(x)-z(x))-\hat h(x)\|^2
\le
2L_{\mathcal P}(\hat f)+2\,\mathbb E_{x\sim\mathcal P}\|\hat h(x)\|^2.
\]
By Assumption~\ref{as:decomp}(i) (Norm control, \eqref{eq:norm_control}),
\[
\mathbb E_{x\sim\mathcal P}\|\hat h(x)\|^2 \le a_{\mathcal P}R_H(\hat h),
\qquad
\mathbb E_{x\sim\mathcal Q}\|\hat h(x)\|^2 \le a_{\mathcal Q}R_H(\hat h).
\]
Combining these bounds,
\begin{align*}
L_{\mathcal Q}(\hat f)
&\le
2\gamma\Bigl(2L_{\mathcal P}(\hat f)+2a_{\mathcal P}R_H(\hat h)\Bigr)
\;+\;2a_{\mathcal Q}R_H(\hat h) \\
&=
4\gamma\,L_{\mathcal P}(\hat f)
+\bigl(4\gamma a_{\mathcal P}+2a_{\mathcal Q}\bigr)R_H(\hat h).
\end{align*}
Applying \eqref{eq:basic_bounds} yields
\[
L_{\mathcal Q}(\hat f)
\le
\bigl(4\gamma+4\gamma a_{\mathcal P}+2a_{\mathcal Q}\bigr)\,(\epsilon_k+\rho_k)
=
C_2\,(\epsilon_k+\rho_k).
\]
Combining (1)--(3) completes the proof.
\end{proof}

\section{Proof of Proposition~\ref{prop:decomp_instantiation}}
\label{subsec:proof_prop_decomp_instantiation}

\begin{proof}
We work on the support \(\mathcal U=\{\pm1\}^p\) via \(u=\pi(x)=x_{1:p}\).
Recall \(\mathcal P=\mathrm{Unif}(\mathcal U_P)\), \(\mathcal Q=\mathrm{Unif}(\mathcal U_Q)\), and
\(|\mathcal U_P|=|\mathcal U_Q|=M=2^{p-1}\).
The target is \(z(u)=u^\top \theta\) with \(\theta\in\mathbb R_{\ge 0}^p\).

\paragraph{1) Distribution-shift control on \(\mathcal G_k\) with \(\gamma=1\), and the value of \(\epsilon_k\).}
For \(a(u)=\mathrm{diag}(w_1,\dots,w_p)u\) with \(0\le w_i\le \tau\), we have
\(a^{\circ k}(u)=\mathrm{diag}(w_1^k,\dots,w_p^k)u\). With \(\psi_1(v)=\mathbf 1^\top v\),
any \(g\in\mathcal G_k\) can be written as
\[
g(u)=\mathbf 1^\top a^{\circ k}(u)=\sum_{i=1}^p \alpha_i u_i,
\qquad \alpha_i\coloneqq w_i^k\in[0,\tau^k].
\]
Hence
\[
(g(u)-z(u))^2=\Big(\sum_{i=1}^p (\alpha_i-\theta_i)u_i\Big)^2.
\]
Under both \(\mathcal P\) and \(\mathcal Q\), the coordinates satisfy
\(\mathbb E[u_i u_j]=0\) for \(i\neq j\) and \(\mathbb E[u_i^2]=1\) for all \(i\) (note \(u_1\equiv \pm1\) but
still \(u_1^2=1\) and \(\mathbb E[u_1u_j]=\mathbb E[u_j]=0\) for \(j\ge 2\)).
Therefore,
\[
L_{\mathcal P}(g)=\sum_{i=1}^p (\alpha_i-\theta_i)^2
\;=\;L_{\mathcal Q}(g),
\]
so \eqref{eq:shift_control_G} holds with \(\gamma=1\).

To compute \(\epsilon_k\), we minimize \(\sum_{i=1}^p(\alpha_i-\theta_i)^2\) subject to \(\alpha_i\in[0,\tau^k]\),
which yields \(\alpha_i^\star=\min\{\theta_i,\tau^k\}\). Hence
\[
\epsilon_k=\min_{g\in\mathcal G_k}L_{\mathcal P}(g)
=\sum_{i=1}^p(\theta_i-\min\{\theta_i,\tau^k\})^2
=\sum_{i=1}^p\bigl((\theta_i-\tau^k)_+\bigr)^2.
\]

\paragraph{2) Norm control on \(\mathcal H_k\) with \(a_{\mathcal P}=a_{\mathcal Q}=1/(\lambda M)\).}
Here \(l=1\) and \(\psi_2(t)=t\), so any \(h\in\mathcal H_k\) is of the form \(h(u)=b(u)\) for some
\(b\in\mathcal B\). For the Kronecker delta kernel \(K(u,u')=\mathbf 1\{u=u'\}\) on the finite set \(\mathcal U\),
the RKHS norm satisfies
\[
\|b\|_{\mathcal B}^2=\sum_{u\in\mathcal U} b(u)^2.
\]
Moreover, the instantiation defines \(R_H(h)\coloneqq \lambda\|b\|_{\mathcal B}^2\).
Thus,
\[
\mathbb E_{u\sim\mathcal P}\,|h(u)|^2
=\frac{1}{M}\sum_{u\in\mathcal U_P} b(u)^2
\le \frac{1}{M}\sum_{u\in\mathcal U} b(u)^2
=\frac{1}{M}\|b\|_{\mathcal B}^2
=\frac{1}{\lambda M}R_H(h).
\]
The same argument holds for \(\mathcal Q\). Hence \eqref{eq:norm_control} holds with
\(a_{\mathcal P}=a_{\mathcal Q}=1/(\lambda M)\).

\paragraph{3) Bounded regularization for a structured fit and the value of \(\rho_k\).}
Under the instantiation, \(R_G(g)\coloneqq \lambda\|a\|^2\) with \(\|a\|^2=\sum_{i=1}^p w_i^2\).
Let \(g_k^\star\) be the minimizer achieving \(\alpha_i^\star=\min\{\theta_i,\tau^k\}\) above, and choose
\(w_i^\star\coloneqq (\alpha_i^\star)^{1/k}\in[0,\tau]\). Then \(g_k^\star\in\mathcal G_k\) and
\(L_{\mathcal P}(g_k^\star)=\epsilon_k\).
Moreover,
\[
R_G(g_k^\star)=\lambda\sum_{i=1}^p (w_i^\star)^2
=\lambda\sum_{i=1}^p \min\{\theta_i^{2/k},\tau^2\}
\le \lambda\sum_{i=1}^p \theta_i^{2/k}.
\]
Thus Assumption~\ref{as:decomp}(ii) holds with \(\rho_k=\lambda\sum_{i=1}^p \theta_i^{2/k}\).

\paragraph{4) Value of \(\delta_k\).}
Recall \(\delta_k=\min_{h\in\mathcal H_k}\bigl(L_{\mathcal P}(h)+R_H(h)\bigr)\).
Write \(h(u)=b(u)\). Since \(L_{\mathcal P}\) depends only on values on \(\mathcal U_P\), and \(R_H\) penalizes
\(\sum_{u\in\mathcal U}b(u)^2\), any minimizer satisfies \(b(u)=0\) for all \(u\notin\mathcal U_P\).
Thus the problem reduces to
\[
\min_{\{b(u)\}_{u\in\mathcal U_P}}
\frac{1}{M}\sum_{u\in\mathcal U_P}\bigl(b(u)-z(u)\bigr)^2
+\lambda\sum_{u\in\mathcal U_P} b(u)^2.
\]
This decouples over \(u\in\mathcal U_P\). For each \(u\), minimizing
\(\frac{1}{M}(b-z)^2+\lambda b^2\) gives \(b^\star=\frac{z}{1+\lambda M}\).
Plugging back yields the optimal value
\[
\delta_k
=\frac{\lambda}{1+\lambda M}\sum_{u\in\mathcal U_P} z(u)^2.
\]
Finally, under \(u\sim \mathrm{Unif}(\mathcal U_P)\),
\(\mathbb E[z(u)^2]=\theta^\top \mathbb E[uu^\top]\theta=\|\theta\|_2^2\), and hence
\(\sum_{u\in\mathcal U_P} z(u)^2 = M\|\theta\|_2^2\).
Therefore,
\[
\delta_k=\frac{\lambda M}{1+\lambda M}\,\|\theta\|_2^2.
\]

\paragraph{5) Shortcut failure on \(\mathcal H_k\) with \(C=1\).}
Let \((\hat g,\hat h)\) be any minimizer of \eqref{eq:erm}, and write \(\hat h(u)=\hat b(u)\).
As above, the objective depends on \(\hat b(u)\) for \(u\notin\mathcal U_P\) only through the penalty term
\(\lambda\sum_{u\in\mathcal U}\hat b(u)^2\). Therefore, for any minimizer we must have
\(\hat b(u)=0\) for all \(u\notin\mathcal U_P\), and in particular \(\hat h(u)=0\) for all \(u\in\mathcal U_Q\).
Define \(\hat r\coloneqq z-\hat g\). Then on \(\mathcal Q\),
\[
\hat h(u)-\hat r(u) = -\hat r(u),
\qquad u\in\mathcal U_Q,
\]
and hence
\[
\mathbb E_{u\sim\mathcal Q}\,|\hat h(u)-\hat r(u)|^2
=
\mathbb E_{u\sim\mathcal Q}\,|\hat r(u)|^2.
\]
Thus \eqref{eq:shortcut_failure} holds with \(C=1\).

Combining the above verifications establishes Assumption~\ref{as:decomp} with the stated constants and the
claimed expressions for \(\epsilon_k,\rho_k,\delta_k\).
\end{proof}

\section{Experimental Details}\label{apdx:exp_details}

\subsection{Additional Details for Section~\ref{subsec: phop}}

\subsubsection{Description of the $p$-hop Problem}\label{apdx:phop_def}

Formally, fix a finite alphabet $\Sigma$ and a sequence $v\in\Sigma^n$. Define the \emph{one-step backtracking} map
\[
\find_1(v,i)\;\coloneqq\;\max\Bigl(\{0\}\cup\{\,j\le i:\ v_{j-1}=v_i\,\}\Bigr),
\]
and for $p\ge 2$ define the $p$-step backtracking recursively by
\[
\find_p(v,i)\;\coloneqq\;\find_1\bigl(v,\find_{p-1}(v,i)\bigr).
\]
The $p$-hop target is then defined as $\hop_p(v)\coloneqq v_{\find_p(v,n)}$, and in our data generation we ensure that the $p$-hop always exists (i.e., $\find_p(v,n)\neq 0$).
Intuitively, the task asks the model to recursively backtrack through the sequence for $p$ steps according to the $\find_1$ rule, and then output the symbol found at the resulting location.

Following \cite{saunshi2025reasoning}, we generate sequences by first sampling a valid $p$-hop chain and then filling the remaining positions with random \emph{filler} symbols while respecting the $p$-hop constraints.

In our experiments, we set $p=4$, the vocabulary size to $4$, and the sequence length to $12$.

\subsubsection{Model and Training Details}
\label{apdx:phop_model_training}

\paragraph{Architecture.}
We use a looped Transformer classifier operating on fixed-length token sequences.
Each input sequence is embedded using learned token embeddings and learned positional embeddings.
The backbone consists of a single Transformer block, comprising multi-head self-attention followed by a two-layer MLP with GELU activations, residual connections, and layer normalization.
To scale latent reasoning depth, the backbone is applied repeatedly for $k\in\{1,2,4,8,10,12,14,16,24,28,32,36,40,44,48\}$ loops, with parameters shared across loops.
The final hidden state at the last sequence position is passed through a linear classification head to predict the output symbol.
The classification head is linear and does not use bias.
We use model dimension $256$, with $8$ attention heads and MLP hidden dimension $512$.

\paragraph{Optimization.}
Models are trained using AdamW with learning rate $10^{-3}$, cosine decay to a minimum learning rate of $10^{-4}$, and a warmup of $1000$ steps.
We set weight decay to zero.
Training is run for $20{,}000$ steps with batch size $256$.
Gradients are clipped to norm $1.0$.
We use automatic mixed precision for efficiency.

\begin{figure}[!t]
    \centering
    \includegraphics[width=0.32\linewidth]{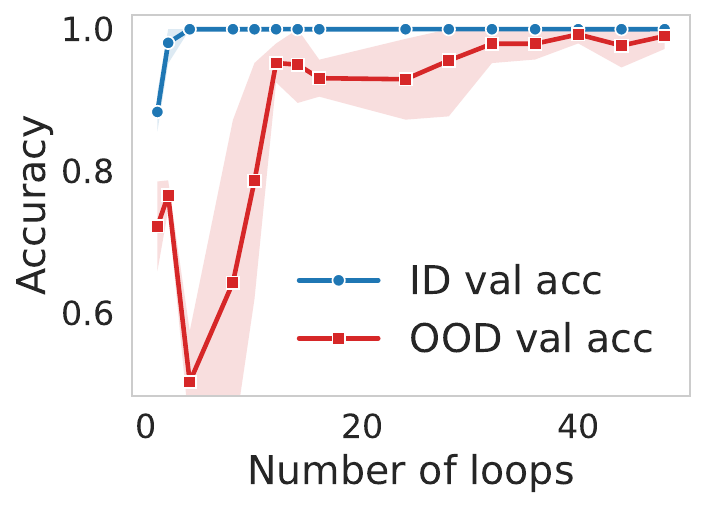}
    \caption{Results for looped Transformers on the $4$-hop task with loop embeddings show a pattern consistent with Figure~\ref{fig:phop}. ID accuracy saturates around 2 loops, while OOD accuracy continues improving up to around 36 loops.}
    \label{fig:phop_loop_embd}
\end{figure}

\subsubsection{Additional Experiments with Loop Embedding}\label{apdx:phop_loop_embedding}

In addition to the vanilla looped Transformer considered in Section \ref{subsec: phop}, we also study a variant that incorporates \emph{loop embeddings}.
Concretely, at each loop iteration $\ell\in\{0,\dots,k-1\}$, we add a learned embedding vector $e_k\in\mathbb{R}^{d_{\mathrm{model}}}$ to the hidden states before applying the shared Transformer block.
This provides the model with explicit information about the current iteration index, allowing the weight-tied backbone to implement iteration-dependent update rules.
This design is analogous to timestep or iteration encodings used in recurrent and looped Transformer variants, such as the Universal Transformer \cite{dehghani2018universal}, and more recent work on looped Transformers \cite{xu2024expressive} and continuous CoT models \cite{hao2024training}. Figure~\ref{fig:phop_loop_embd} reports results for looped Transformers with loop embeddings. We observe the same pattern as in Figure~\ref{fig:phop}: ID validation accuracy saturates rapidly around 2 loops, while OOD accuracy continues to improve with increasing loop count, exhibiting clear gains up to around $36$ loops.
This indicates that the OOD improvements observed in Section~\ref{subsec: phop} are robust to the inclusion of explicit loop-index conditioning and do not rely on strict homogeneity across iterations.

\subsection{Additional Details for Section~\ref{subsec: rl}}
\label{apdx:rl}

\subsubsection{Data and Training Details}
\label{apdx:rl_data_training}

\paragraph{Additional dataset details.}
For both the ID and OOD datasets, we use 900 examples for training and 100 examples for validation. We create a custom variant of the \texttt{Polynomial Roots} task because the original version is too difficult for the 1.5B base model (0\% accuracy), making RL training uninformative. In our variant, each example presents a univariate polynomial $P(x)$ (with integer coefficients) and asks for the \emph{largest rational root} written in lowest terms as $n/m$, followed by the derived target $|n|\cdot|m|$. Concretely, we construct polynomials by sampling a small number of rational roots and coefficient factors, forming linear terms of the form $(A_i x - b_i)$, and multiplying them using \texttt{sympy} to obtain $P(x)$, which is then used in the prompt. The ground-truth label is computed by identifying the largest rational root of $P(x)$, reducing it to coprime integers $(n,m)$, and outputting $|n|\cdot|m|$. For example, one instance from our dataset is:
\begin{quote}\small
Let $P(x)=144x^3-276x^2-104x-9$. Find the largest root $n/m$ where $n$ and $m$ are coprime. What is $|n|\cdot|m|$?
\end{quote}
The largest root is $9/4$, so the target is $|9|\cdot|4|=36$.
This construction yields instances that are solvable by the 1.5B model while still requiring multi-step reasoning.

\paragraph{Training and evaluation setup.}
We fine-tune \texttt{Qwen2.5-1.5B-Instruct} using GRPO~\cite{guo2025deepseek} with a correctness-only reward, and apply
LoRA~\cite{hu2022lora} with rank $32$ and $\alpha=64$.
All experiments are run using the \texttt{verl} framework.
We train for 4 epochs (224 total training steps) with AdamW, a learning rate of $3\times10^{-6}$, and a
KL regularization coefficient of $0.001$.
The batch size is 16, with 16 generations sampled per prompt.
Training is performed on 4 NVIDIA A6000 GPUs.
At evaluation time, we decode with temperature $1$.

\begin{figure}[!t]
    \centering
    \includegraphics[width=0.6\linewidth]{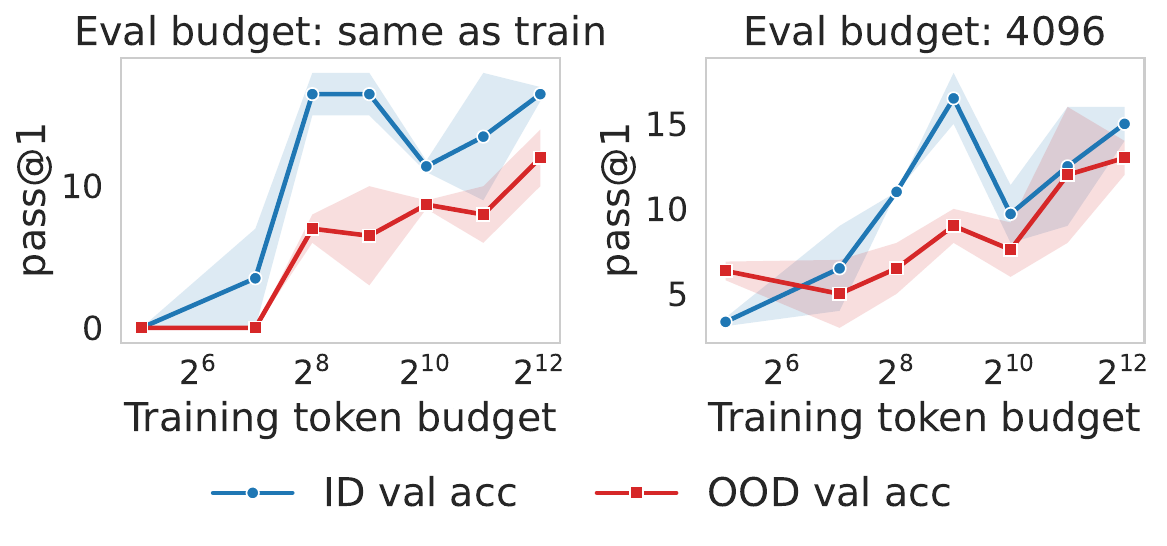}
    \vspace{-.2cm}
\caption{Pass@1 results show the same qualitative pattern: ID performance oscillates slightly more but exhibits no clear improvement beyond a token budget of 256, while OOD accuracy continues to improve steadily under both evaluation setups.\looseness=-1}
    \vspace{-.3cm}
    \label{fig:rl_poly_pass1}
\end{figure}

\subsubsection{Additional Results}
\label{apdx:rl_additional}

The Pass@1 counterpart to Figure~\ref{fig:rl_poly} is shown in Figure~\ref{fig:rl_poly_pass1}. ID accuracy still peaks around a token budget of 256, while OOD accuracy continues to improve steadily under both evaluation setups.

\begin{examplebox}{Example outputs at different training-time token budgets}
\noindent\textbf{Question.}
\begin{modelout}
For 5x**3 + 13x*2 + 4x - 4 = 0, express the smallest root as n/m in lowest terms. Calculate the product of absolute values |n| * |m|?    
\end{modelout}
\vspace{0.5em}
\noindent\textbf{Model trained with a token budget of 128.}
\begin{modelout}
\\[\n\\boxed{2}\n\\]
\end{modelout}

\vspace{0.5em}
\noindent\textbf{Model trained with a token budget of 256.}
\begin{modelout}
We first find the roots of the given cubic equation by factoring or using numerical methods. This gives us the roots -4, -2, and 1/2. The smallest root is -2, which is in the form of a fraction. Therefore, n=2 and m=1, and their product of absolute values is |n| * |m| = 2 * 1 = \\boxed{2}.
\end{modelout}

\vspace{0.5em}
\noindent\textbf{Model trained with a token budget of 4096.}
\begin{modelout}
(Continued on next page.)
\end{modelout}
\begin{modelout}
To find the smallest root of the polynomial equation \\(5x^3 + 13x^2 + 4x - 4 = 0\\), we can use the Rational Root Theorem, which suggests that any
    rational root of the polynomial equation is a factor of the constant term (-4) divided by a factor of the leading coefficient (5). The factors of 4 are \\(\\pm 1, \\pm 2,
    \\pm 4\\), and the factors of 5 are \\(\\pm 1, \\pm 5\\). Therefore, the possible rational roots are \\(\\pm 1, \\pm \\frac{1}{5}, \\pm 2, \\pm \\frac{2}{5}, \\pm 4, \\pm
    \\frac{4}{5}\\).\n\nWe will test the possible roots by substituting them into the polynomial:\n\nFirst, let's test \\(x = 1\\):\n\\[5(1)^3 + 13(1)^2 + 4(1) - 4 = 5 + 13 + 4
     - 4 = 18 \\neq 0.\\]\nSo, \\(x = 1\\) is not a root.\n\nNext, let's test \\(x = -1\\):\n\\[5(-1)^3 + 13(-1)^2 + 4(-1) - 4 = -5 + 13 - 4 - 4 = -2 \\neq 0.\\]\nSo, \\(x =
    -1\\) is not a root.\n\nNext, let's test \\(x = \\frac{1}{5}\\):\n\\[5\\left(\\frac{1}{5}\\right)^3 + 13\\left(\\frac{1}{5}\\right)^2 + 4\\left(\\frac{1}{5}\\right) - 4 =
    5\\left(\\frac{1}{125}\\right) + 13\\left(\\frac{1}{25}\\right) + 4\\left(\\frac{1}{5}\\right) - 4 = \\frac{5}{125} + \\frac{13}{25} + \\frac{4}{5} - 4.\\]\nConverting all
    terms to a common denominator (125):\n\\[\\frac{5}{125} + \\frac{13(5)}{125} + \\frac{80}{125} - \\frac{500}{125} = \\frac{5 + 65 + 80 - 500}{125} = \\frac{-350}{125} =
    -2.8 \\neq 0.\\]\nSo, \\(x = \\frac{1}{5}\\) is not a root.\n\nNext, let's test \\(x = -\\frac{1}{5}\\):\n\\[5\\left(-\\frac{1}{5}\\right)^3 +
    13\\left(-\\frac{1}{5}\\right)^2 + 4\\left(-\\frac{1}{5}\\right) - 4 = 5\\left(-\\frac{1}{125}\\right) + 13\\left(\\frac{1}{25}\\right) + 4\\left(-\\frac{1}{5}\\right) - 4
    = -\\frac{5}{125} + \\frac{13}{25} - \\frac{4}{5} - 4.\\]\nConverting all terms to a common denominator (125):\n\\[-\\frac{5}{125} + \\frac{13(5)}{125} - \\frac{80}{125} -
    \\frac{500}{125} = -\\frac{5}{125} + \\frac{65}{125} - \\frac{80}{125} - \\frac{500}{125} = -\\frac{580}{125} = -4.64 \\neq 0.\\]\nSo, \\(x = -\\frac{1}{5}\\) is not a
    root.\n\nNext, let's test \\(x = 2\\):\n\\[5(2)^3 + 13(2)^2 + 4(2) - 4 = 5 \\cdot 8 + 13 \\cdot 4 + 4 \\cdot 2 - 4 = 40 + 52 + 8 - 4 = 96 \\neq 0.\\]\nSo, \\(x = 2\\) is
    not a root.\n\nNext, let's test \\(x = -2\\):\n\\[5(-2)^3 + 13(-2)^2 + 4(-2) - 4 = 5 \\cdot (-8) + 13 \\cdot 4 + 4 \\cdot (-2) - 4 = -40 + 52 - 8 - 4 = 0.\\]\nSo, \\(x =
    -2\\) is a root.\n\nSince \\(x = -2\\) is a root, we can perform polynomial division of \\(5x^3 + 13x^2 + 4x - 4\\) by \\(x + 2\\). Using synthetic
    division:\n\n\\[\n\\begin{array}{r|rrrr}\n-2 & 5 & 13 & 4 & -4 \\\\\n   &   & -10 & -6 & 4 \\\\\n\\hline\n   & 5 & 3 & -2 & 0 \\\\\n\\end{array}\n\\]\nThe quotient is
    \\(5x^2 + 3x - 2\\), so we have:\n\\[5x^3 + 13x^2 + 4x - 4 = (x + 2)(5x^2 + 3x - 2).\\]\nNext, we solve the quadratic equation \\(5x^2 + 3x - 2 = 0\\) using the quadratic
    formula \\(x = \\frac{-b \\pm \\sqrt{b^2 - 4ac}}{2a}\\):\n\\[x = \\frac{-3 \\pm \\sqrt{3^2 - 4 \\cdot 5 \\cdot (-2)}}{2 \\cdot 5} = \\frac{-3 \\pm \\sqrt{9 + 40}}{10} =
    \\frac{-3 \\pm \\sqrt{49}}{10} = \\frac{-3 \\pm 7}{10}.\\]\nThis gives us two solutions:\n\\[x = \\frac{-3 + 7}{10} = \\frac{4}{10} = \\frac{2}{5},\\]\n\\[x = \\frac{-3 -
    7}{10} = \\frac{-10}{10} = -1.\\]\nThe roots of the equation \\(5x^3 + 13x^2 + 4x - 4 = 0\\) are \\(x = -2\\) and \\(x = \\frac{2}{5}\\) and \\(x = -1\\). The smallest root
     is \\(-2\\), so \\(n = -2\\) and \\(m = 1\\).\n\nThe product of the absolute values of \\(n\\) and \\(m\\) is:\n\\[|n| \\cdot |m| = |-2| \\cdot |1| = 2 \\cdot 1 =
    2.\\]\nThus, the answer is:\n\\[\\boxed{2}.\\]
\end{modelout}
\end{examplebox}

\end{document}